\documentclass[conference]{IEEEtran}
\IEEEoverridecommandlockouts
\usepackage{times}

\usepackage[numbers]{natbib}
\usepackage{multicol}
\usepackage[bookmarks=true]{hyperref}

\usepackage{booktabs}
\usepackage{multirow}
\usepackage{amsmath,amssymb}
\usepackage{subfig}
\usepackage{graphicx}
\usepackage{xspace}

\newtheorem{theorem}{Theorem}[section]

\newtheorem{lemma}[theorem]{Lemma}

\newcommand{\method}{CLAMP\xspace}
\newcommand{\ba}{\mathbf{a}}

\begin{document}

\title{\method: Contrastive Learning for 3D \\Multi-View Action-Conditioned Robotic \\Manipulation Pretraining}

\author{\IEEEauthorblockN{I-Chun Arthur Liu\IEEEauthorrefmark{1}\IEEEauthorrefmark{2},
Krzysztof Choromanski\IEEEauthorrefmark{1},
Sandy Huang\IEEEauthorrefmark{1}, 
Connor Schenck\IEEEauthorrefmark{1}
\IEEEauthorblockA{\IEEEauthorrefmark{1}Google DeepMind \IEEEauthorrefmark{2}University of Southern California}
\thanks{\IEEEauthorrefmark{2}Work done as a student researcher at Google DeepMind.}
}}

\maketitle

\begin{abstract}
Leveraging pre-trained 2D image representations in behavior cloning policies has achieved great success and has become a standard approach for robotic manipulation. However, such representations fail to capture the 3D spatial information about objects and scenes that is essential for precise manipulation. In this work, we introduce Contrastive Learning for 3D Multi-View Action-Conditioned Robotic Manipulation Pretraining (\method), a novel 3D pre-training framework that utilizes point clouds and robot actions. From the merged point cloud computed from RGB-D images and camera extrinsics, we re-render multi-view four-channel image observations with depth and 3D coordinates, including dynamic wrist views, to provide clearer views of target objects for high-precision manipulation tasks. The pre-trained encoders learn to associate the 3D geometric and positional information of objects with robot action patterns via contrastive learning on large-scale simulated robot trajectories. During encoder pre-training, we pre-train a Diffusion Policy to initialize the policy weights for fine-tuning, which is essential for improving fine-tuning sample efficiency and performance. After pre-training, we fine-tune the policy on a limited amount of task demonstrations using the learned image and action representations. We demonstrate that this pre-training and fine-tuning design substantially improves learning efficiency and policy performance on unseen tasks. Furthermore, we show that \method outperforms state-of-the-art baselines across six simulated tasks and five real-world tasks. The project website and videos can be found at \href{https://clamp3d.github.io/CLAMP/}{https://clamp3d.github.io/CLAMP/}.
\end{abstract}

\IEEEpeerreviewmaketitle

\section{Introduction}

\begin{figure}[t]
    % \vspace{-0.5em}
    \centering
    \includegraphics[width=\columnwidth]{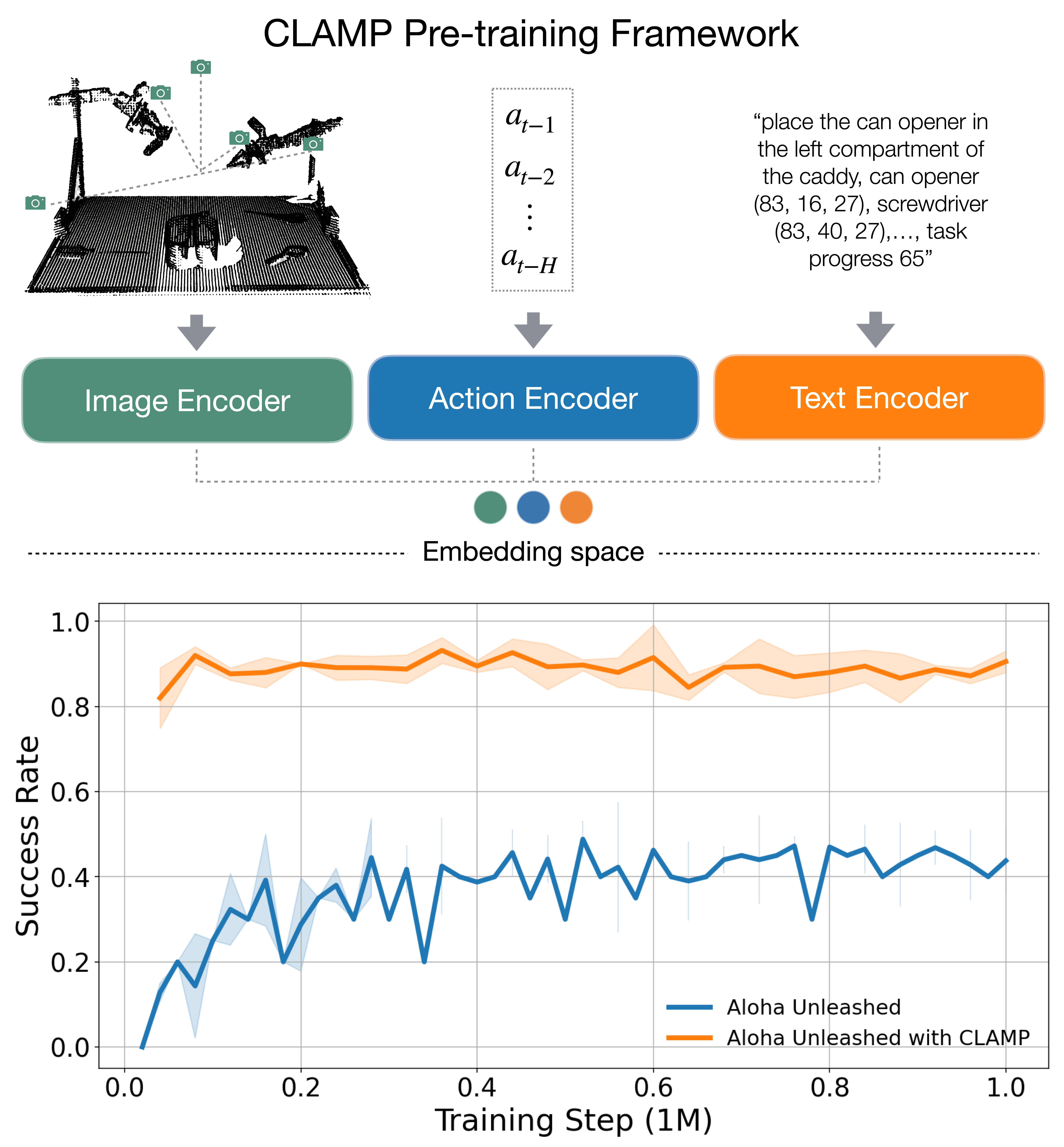}
    \caption{\method is a 3D pre-training framework for robotic manipulation. We contrastively pre-train an image encoder with fixed and dynamic multi-view observations, an action encoder with histories of action chunks, and a text encoder with spatial and temporal object- and task-level information. \method learns image and action representations that improve downstream policy fine-tuning efficiency and performance. The bottom figure compares the ALOHA Unleashed evaluation curves during fine-tuning without and with \method on the \texttt{Mug on Plate} task.}
    \label{fig:hero_figure}
    \vspace{-1.0em}
\end{figure}

Contrastive pre-training on internet-scale image–text pairs has become a standard approach for learning strong computer vision backbones \cite{radford2021learning,zhai2023sigmoid,maninis2024tips}. However, an analogous standard approach has not emerged in robotics due to the limited availability of large-scale robot data. Moreover, it remains unclear which modalities and image representations, such as RGB, point clouds, or voxel grids, are best suited for pre-training. Pre-training in robotics is further complicated by high-dimensional state and action spaces.

\begin{figure*}[t]
    \centering
    \includegraphics[width=\textwidth]{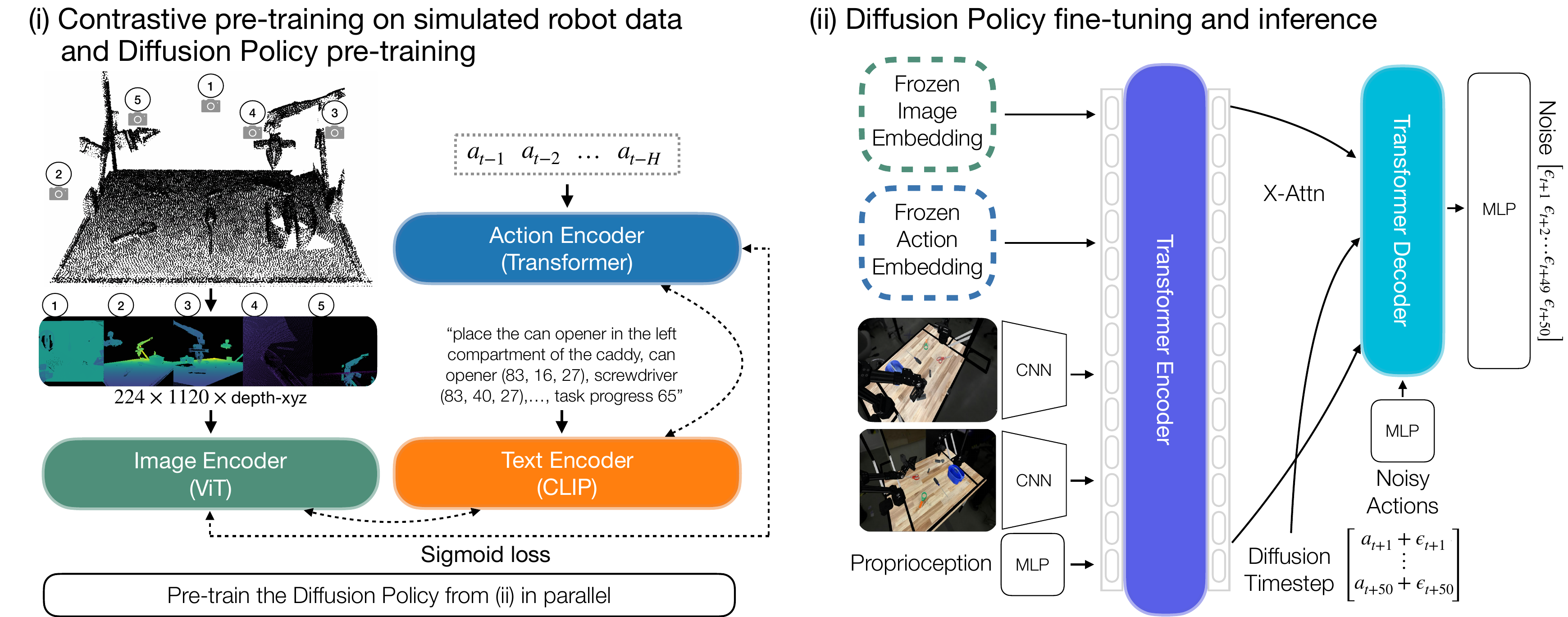}
    \caption{\textbf{Overview of \method}. \textbf{(i):} \method consists of three encoders: image, action and text. The image encoder is a Vision Transformer (ViT) \cite{dosovitskiy2020image} that takes five multi-view observations (overhead, back-right, front-left, wrist-left, and wrist-right) as input. These views are rendered from a merged point cloud and include depth and 3D coordinates. The action encoder is a Transformer encoder that takes a history of previous actions ($a_{t-1},\dots,a_{t-H}$) as input. The text input includes a general task description, each object's name and normalized position in the global coordinate frame, and an integer indicating task progress, and is processed by a CLIP \cite{radford2021learning} text encoder. The intuition is to learn image and action representations that capture correlations between action patterns and robot states as observed in images, grounded by text that describes the spatial and temporal information about the objects and task. We also pre-train a Diffusion Policy \cite{zhao2024aloha} in parallel. 
    \textbf{(ii):} The Diffusion Policy is first initialized with the pre-trained weights from stage (i) and use the frozen \method encoders to extract image and action embeddings.
    These embeddings are fed into a Transformer encoder together with RGB feature maps from ResNet-50 backbones and proprioceptive features from the robot to predict noise of shape $50 \times 14$ for the next 50 actions ($\epsilon_{t+1},\dots, \epsilon_{t+50}$).
    }
    \label{fig:model_architecture}
    % \vspace{-1.5em}
\end{figure*}

Recent works on robotic manipulation \cite{intelligence2025pi05visionlanguageactionmodelopenworld,nvidia2025gr00tn1openfoundation} have explored leveraging web-scale images, videos, and text with simulated and real-world robot data for pre-training. However, it is not well understood which data sources contribute most to policy fine-tuning performance. Moreover, these approaches do not pre-train on 3D data or incorporate 3D perception, which provides the spatial information robots need to plan and act effectively in the physical world. A recent work, 3D-MVP \cite{qian20253d}, proposes a 3D multi-view pre-training framework using masked autoencoders for manipulation. This method learns image representations through image reconstruction, but it does not leverage robot action data during pre-training. We hypothesize that 3D perception and robot actions are valuable pre-training modalities because they are fundamental to how robots perceive and act in the world.

In this work, we propose \textbf{C}ontrastive \textbf{L}earning for 3D Multi-View \textbf{A}ction-Conditioned Robotic \textbf{M}anipulation \textbf{P}retraining (\textbf{\method}). We contrastively pre-train an image encoder, a text encoder, and an action encoder on large-scale simulated robot data by pulling positive image–text, image–action, and text–action pairs closer together while pushing negative pairs apart. See \autoref{fig:hero_figure} for an overview.

For image representation, we re-render multi-view four-channel observations from the merged point cloud that include depth and 3D coordinates, using fixed viewpoints and dynamic wrist views, which provide clearer observations for high-precision manipulation tasks. We apply STRING \cite{schenck2025learning} relative positional encoding to the image encoder, enabling tokens from different views that correspond to nearby regions in 3D space to be correlated. For action data, we use a history of previous actions. Our intuition is that many manipulation tasks share common action patterns or trajectory segments that can be learned during pre-training, thereby reducing the amount of task-specific learning required during fine-tuning. For text, we extract object spatial relationships and task progress from the simulator and use this information to ground image and action representations.
Note that the text encoder is used exclusively during pre-training; therefore, \method does not rely on privileged state information during policy deployment.
In addition, we also pre-train a Diffusion Policy, the visuomotor controller for the robot, in parallel to initialize the policy for fine-tuning. We find that appropriate initialization substantially improves fine-tuning efficiency and performance. We then fine-tune the policy using task-specific demonstrations for each task.

In summary, our contributions are as follows: (i) We introduce a novel 3D contrastive pre-training framework for robotic manipulation that leverages robot actions and dynamic wrist views. (ii) We are the first to apply STRING relative positional encoding using 3D coordinates derived from point clouds. (iii) We show that dynamic wrist views are critical for strong results on high-precision tasks, and that pre-training both the encoders and the policy yields substantially better performance than pre-training the encoders alone. (iv) Extensive empirical results show that \method pre-training greatly improves sample efficiency and policy performance during fine-tuning, outperforming state-of-the-art baselines.

\section{Related Work}

Contrastive learning has seen significant interest in recent years \cite{hu2024comprehensive} with success on tasks such as view-invariant dense object features \cite{schmidt2016self,hou2021pri3d} and robotic manipulation \cite{graf2023learning,lee2025class}.
One major area of interest in contrastive learning has been training large-scale vision encoders.
Modern approaches such as CLIP \cite{radford2021learning}, ALIGN \cite{jia2021scaling}, COCA \cite{yu2022coca}, OpenCLIP \cite{cherti2023reproducible}, SigLIP \cite{zhai2023sigmoid}, Florence \cite{yuan2021florence}, InternVL \cite{chen2024internvl}, and EVA \cite{fang2023eva,fang2024eva,sun2023eva} use contrastive learning to create semantically meaningful image embeddings by aligning vision encoders with text embeddings derived from image captions.
TIPS \cite{maninis2024tips} takes this a step further by introducing additional loss terms to maintain spatial features as well as semantic features in the image embeddings.
ULIP \cite{xue2023ulip} and ULIP-2 \cite{xue2024ulip} lift contrastive learning into 3D by aligning a 3D encoder with pre-aligned text and image encoders.
In this paper, we utilize contrastive learning to pre-train 3D, text, and action encoders.

Interfacing unordered point clouds with learned models has been a challenge in computer vision and robotics.
PointNet \cite{qi2017pointnet} and PointNet++ \cite{qi2017pointnetb} showed early success by utilizing an MLP and max pooling layers for object classification and segmentation.
PointMLP \cite{ma2022rethinking} added to this a lightweight geometric affine module and residual connections.
DGCNN \cite{wang2019dynamic} takes a different approach, applying a convolutional kernel on a graph neural network constructed from the point cloud.
The advent of transformers \cite{vaswani2017attention}, operating on sets of tokens, created natural opportunities for new point-cloud model architectures.
PCT \cite{guo2021pct} and Point Transformer \cite{zhao2021point} used point-based features directly as the tokens for the self-attention mechanism.
However, these methods required significant downsampling of the point cloud to acheive tractability, even when utilizing local attention.
Point-BERT \cite{yu2022point} solved this by first pooling local regions of points together using a mini PointNet, and generating one token per region.
This methodology has been used and iterated on successfully by many follow-on works \cite{zhou2023uni3d,huang2024frozen,xu2024pointllm,huang2023embodied,zhang2022point,wu2022point,wu2024point} with ULIP \cite{xue2023ulip} and ULIP-2 \cite{xue2024ulip} evaluating both PointNet and Point-BERT as point-cloud encoder backbones.
An alternative approach to Point-BERT style architectures is the multi-view projection approach, i.e., projecting the point cloud to multiple virtual image planes and applying  standard 2D image architectures to the resulting images.
This has shown success for point-cloud text alignment \cite{zhang2022pointclip,zhu2023pointclip}, point-cloud captioning \cite{ma2025multi}, shape classification and recognition \cite{goyal2021revisiting,hamdi2021mvtn,hamdi2021voint}, and 3D visual scene grounding \cite{huang2022multi}.

Prior work in robotics has shown that these point-cloud encoders increase performance. DP3 \cite{ze20243d} showed improved success rate across 72 simulation tasks using a PointNet-style backbone.
ArticuBot \cite{wang2025articubot} and HACMan \cite{zhou2023hacman} utilized PointNet++ backbones to solve articulated object and non-prehensile manipulation tasks respectively.
PerAct \cite{shridhar2023perceiver} and GNFactor \cite{ze2023gnfactor} voxelize their point clouds and feed them into a Perceiver Transformer \cite{jaegle2021perceiver} showing high success rates across multiple tasks.
Act3D \cite{gervet2023act3d} generates point clouds from multiple RGB-D images, samples potential keypoints in a course-to-fine manner, and cross-attends them to select grasp points.
RVT \cite{goyal2023rvt} and 3D-MVP \cite{qian20253d} took the multi-view projection approach and used five orthogonal image planes (1 for each cardinal direction plus a top-down view) to maximize view coverage of the point cloud, yielding significant gains in success rate.
Lift3D \cite{jia2025lift3d} combined both the multi-view projection approach with a Point-BERT backbone by using projection to get features from 2D models, then averaging them over views in the point cloud and generating embeddings using Point-BERT.
In our work, we found the multi-view approach worked best (see Sec. \ref{sec:ablations}), with three fixed views (overhead, front-left, and back-right) and two dynamic views (a view attached to the wrist of each arm).

\section{Methodology}

\subsection{Problem Formulation}

In contrastive pre-training, we aim to learn a model $\psi$ that generates image and action embeddings that can be utilized to improve downstream fine-tuning of a behavior cloning policy. Given a mini-batch of image-text-action triplets $\mathcal{B} = \{(I_1, T_1,A_1),\dots,(I_n,T_n,A_n)\}$, the contrastive objective pulls the embeddings of matched triplets $(I_i, T_i,A_i)$ closer together while pushing apart embeddings of mismatched triplets $(I_i, T_j,A_k)$ for $i \neq j$ and $i \neq k$. We denote the per-pixel 4-channel image observation by $I \in \mathbb{R}^{H_I \times W_I \times 4}$, where $H_I$ and $W_I$ are the image height and width, and the four channels $(D, X, Y, Z)$ correspond to depth and 3D coordinates. Let $T$ be a text string containing (i) a high-level task description, (ii) each object's name and its normalized position in the global coordinate frame, and (iii) an integer indicating task progress. Note that privileged information, such as an object’s name and position, is obtained from the simulator and is used only during contrastive pre-training. $A$ denotes the history of previous actions from a dual-arm robot: $A_{t} = \{A_{t-1},\dots,A_{t-H}\}$, where $H$ is the action chunk size and missing terms are zero-padded if $t - H <0$.

In policy fine-tuning, the objective is to train a policy $\epsilon_\theta$, parameterized by $\theta$, that learns from a dataset of expert demonstrations ${\mathcal{D} = \{E_1, \ldots, E_M\}}$ where ${E_i = (I_1, I^{R}_1, A_1, \ba_{1}, \ldots, I_t, I^{R}_t, A_t, \ba_{t})}$ for a demonstration with $t$ time steps, leveraging pretrained model $\psi$. Here, $I^{R}$ denotes an RGB image observation $I^{R} \in \mathbb{R}^{H_R \times W_R \times 3}$ where $(H_R, W_R)$ may differ from $(H_I, W_I)$, and $\ba$ denotes the target joint positions for the robot. Note that both $I$ and $I^{R}$ can be obtained from a standard RGB-D camera.

\subsection{Contrastive Pre-training}

\textbf{Rendering.} Given RGB-D images captured from a set of cameras, we construct a scene point cloud using the known camera intrinsics and extrinsics.
Following \cite{ze20243d}, we omit RGB colors from the point cloud for better appearance generalization.
We apply 3D bounding-box cropping and voxel-grid downsampling to remove points outside the workspace and reduce point density. We then re-render the scene from five virtual viewpoints, each with depth and 3D coordinates: overhead, front-left, back-right, wrist-left, and wrist-right.
The virtual wrist cameras are defined from the left and right gripper poses, with a small offset for improved visibility.
These wrist views reduce occlusion during object manipulation. 
We empirically find that these dynamic views outperform standard orthogonal views and are particularly effective on high-precision tasks. 

\textbf{Model Architecture.} The encoders in \method consist of an image encoder $f(\cdot)$, a text encoder $g(\cdot)$, and an action encoder $h(\cdot)$. For the image encoder, we use a Vision Transformer (ViT) \cite{dosovitskiy2020image} with STRING \cite{schenck2025learning} relative positional encoding. Specifically, we use a 94M-parameter, 12-layer, 768-wide, B/16 model with 12 attention heads and multihead attention pooling (MAP) \cite{lee2019set}.
For STRING, we use the Cayley STRING formulation, where the 3D coordinates for relative positional encodings are computed as the average global XYZ of the point-cloud points within each image patch.
Because the coordinates are in the global coordinate frame, attention between image patches from different images can be seamlessly attenuated by the STRING mechanism without any additional modification.
More analysis is provided in Sec. \ref{sec:theory}.

The text encoder takes a tokenized text \cite{raffel2020exploring,xue2021mt5} with 256 tokens as input, and we train a CLIP text encoder \cite{radford2021learning} on these tokens. 
We use a 277M-parameter, 12-layer, 768-wide model with 12 attention heads.
The action encoder is a standard Transformer encoder with MAP pooling. We use an 18M-parameter, 2-layer, 768-wide model with 12 attention heads. Each encoder outputs a 768-dimensional embedding.

\textbf{Contrastive Loss.} 
The encoders are trained to minimize a SigLIP-style loss \cite{zhai2023sigmoid} over the three modality pairs: image-text, image-action, and text-action.
\begin{equation}
\mathcal{L}_{\text{ImageText}} = -\frac{1}{|\mathcal{B}|}\sum_{i=1}^{|\mathcal{B}|}\sum_{j=1}^{|\mathcal{B}|}
\log\!\left(\frac{1}{1+e^{\,\ell_{ij}\left(-t\,\mathbf{x}_i^\top \mathbf{y}_j + b\right)})}\right)
\end{equation}

\begin{equation}
\mathcal{L}_{\text{ImageAction}} = -\frac{1}{|\mathcal{B}|}\sum_{i=1}^{|\mathcal{B}|}\sum_{k=1}^{|\mathcal{B}|}
\log\!\left(\frac{1}{1+e^{\,\ell_{ik}\left(-t\,\mathbf{x}_i^\top \mathbf{z}_k + b\right)})}\right)
\end{equation}

\begin{equation}
\mathcal{L}_{\text{TextAction}} = -\frac{1}{|\mathcal{B}|}\sum_{j=1}^{|\mathcal{B}|}\sum_{k=1}^{|\mathcal{B}|}
\log\!\left(\frac{1}{1+e^{\,\ell_{jk}\left(-t\,\mathbf{y}_j^\top \mathbf{z}_k + b\right)})}\right)
\end{equation}
where $\mathbf{x}_i = \frac{f(I_i)}{\left\lVert f(I_i) \right\rVert_2}$, $\mathbf{y}_i = \frac{g(T_i)}{\left\lVert g(T_i) \right\rVert_2}$, and $\mathbf{z}_i = \frac{h(A_i)}{\left\lVert h(A_i) \right\rVert_2}$. $\ell_{ij}$, $\ell_{ik}$, $\ell_{jk}$ are the labels for image-text, image-action, and text-action inputs, respectively; they are set to $1$ for matched pairs and to $-1$ otherwise. Bias $b$ and temperature $t$ are learnable scaling parameters that help stabilize training and prevent overly large updates early on. Following \cite{zhai2023sigmoid}, we initialize $b=-10$ and $t=\exp(\log 10)=10$. 
The total loss of \method encoders is $\frac{1}{3}(\mathcal{L}_{\text{ImageText}} + \mathcal{L}_{\text{ImageAction}} + \mathcal{L}_{\text{TextAction}})$. This formulation encourages paired modalities to be close while pushing unpaired modalities apart in a shared embedding space. The intuition is to learn action patterns or representations that can be used to infer future states, while also learning image representations that can be used to infer the history of preceding robot actions, conditioned on text describing the task type, the spatial arrangement of objects in the scene, and the temporal progress of the task.

\begin{figure*}[t]
    \centering
    \includegraphics[width=0.75\textwidth]{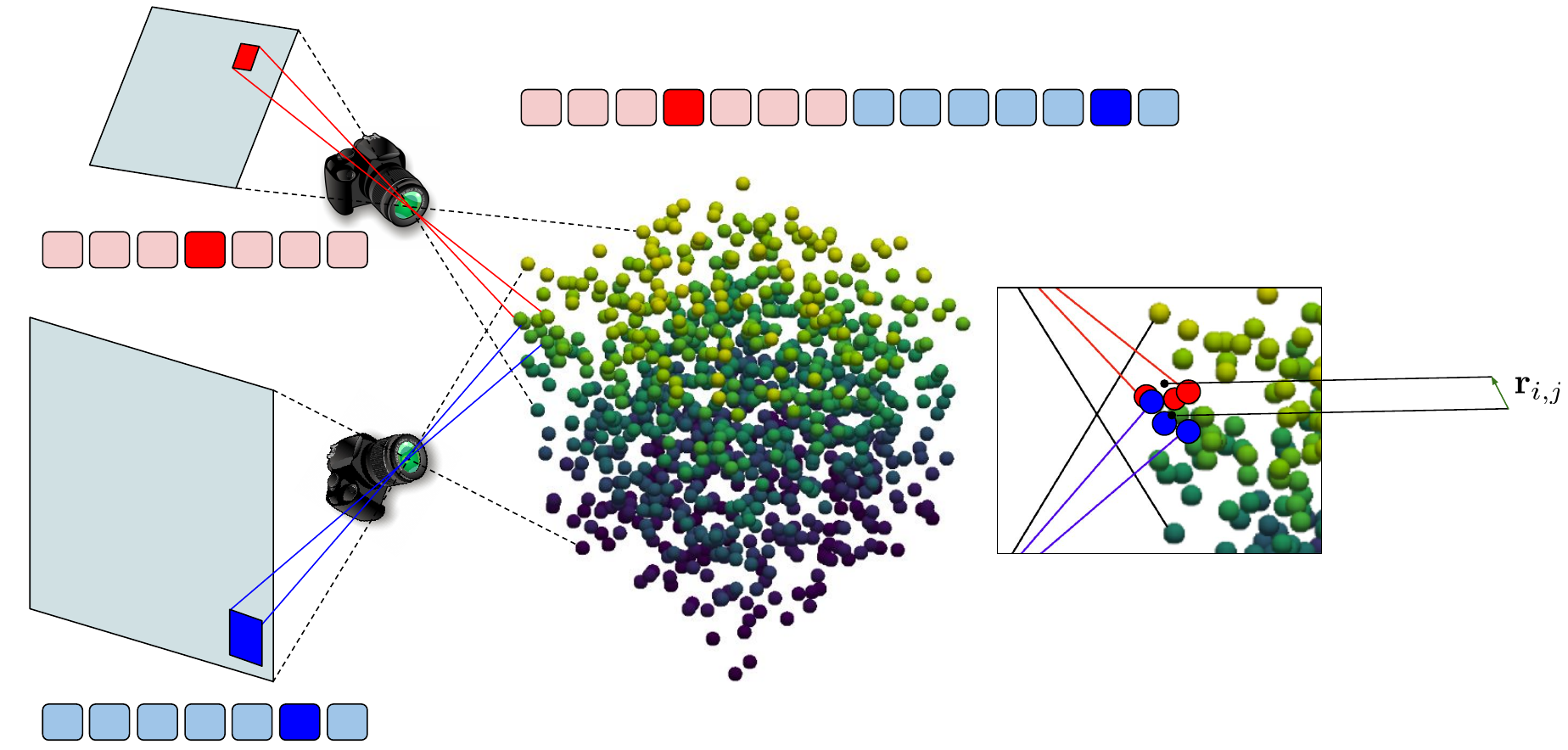}
    \caption{Pictorial desciption of the STRING mechanism applied in CLAMP's image encoder to correlate tokens form different views corresponding to similar regions of the 3D space. Tokens from different 2D views are concatenated, but also equipped with STRING positional encoding mechanism \citep{schenck2025learning}. Consequently, the attention score between two highlighted tokens from two different views will be modulated by a function that depends only on the vector $\mathbf{r}_{i,j} \in \mathbb{R}^{3}$ between the centers of mass of two sub-point clouds (highlighted in red and blue correspondingly) of the visible sets of points related to those tokens.}
    \label{fig:string-clamp}
    % \vspace{-1.5em}
\end{figure*}

\subsection{Diffusion Policy Pre-training and Fine-Tuning}

\textbf{Diffusion Policy Architecture.} The Diffusion Policy is based on the architecture proposed by Zhao et al. \cite{zhao2024aloha}. It has been shown to excel at handling multimodal action distributions \cite{chi2023diffusionpolicy,zhao2024aloha}, which is essential for challenging tasks with multiple action modes (e.g., grasp-to-place vs. grasp-to-hand-over-to-place) or for learning from demonstrations collected by different human operators. We use the Denoising Diffusion Probabilistic Model (DDPM) formulation \cite{ho2020denoising,chi2023diffusionpolicy}. During training, a denoising step $k$ is randomly selected. Conditioned on $k$, we randomly sample random noise $\epsilon^k$ with corresponding variance. The training objective of the noise prediction network $\epsilon_\theta$ is to predict the injected noise $\epsilon^k$: $\mathcal{L} = \mathrm{MSE}\Big(\epsilon^k,\ \epsilon_\theta(\mathbf{O}_t,\ \ba_t^0 + \epsilon^k,\ k)\Big)$.  
At test time, the policy samples an action by iteratively denoising an action sequence:
\begin{equation}
\ba_t^{k-1} = \alpha\Big(\ba_t^k - \gamma\, \epsilon_\theta(\mathbf{O}_t,\ba_t^k,k) + \mathcal{N}(0,\sigma^2 I)\Big),
\end{equation}
where $\ba_t^k$ is sampled from Gaussian noise; $\alpha$, $\gamma$, and $\sigma$ are determined by the DDPM noise schedule; $\mathcal{N}(0,\sigma^2 I)$ denotes additive Gaussian noise; and $\mathbf{O}_t$ is the observation at time t.

We use a 192M-parameter Diffusion Policy with a 4-layer, 768-wide Transformer encoder with 8 attention heads and a 7-layer, 768-wide Transformer decoder with 8 attention heads. Following \cite{zhao2024aloha}, we apply action chunking with a chunk size of 50, and the policy outputs 14 absolute target joint positions. For each camera’s RGB image, we use a separate ResNet-50 vision backbone initialized with weights pretrained on ImageNet classification \cite{5206848}. We extract feature maps from the stage-four output of ResNet. These feature maps are concatenated with unpooled frozen image and action embeddings from pretrained \method encoders, taken before MAP pooling, along with features from the robot’s proprioception. We add positional embeddings before feeding the resulting tokens into the encoder. The encoder latents are then passed to the decoder together with a noised action chunk, a learnable positional embedding, and an one-hot encoding of the diffusion timestep. Finally, the decoder output is passed through a projection head to predict noise with shape $50 \times 14$ for the next 50 actions.

\textbf{Pre-training and Fine-Tuning.} For pre-training, we train the policy on the same dataset used to pre-train the \method encoders for simulated tasks. For real-world tasks, we pre-train the policy on real-world training data, since weights learned from real data provide better initialization due to reduced sim-to-real gap compared with policies pre-trained on simulated data. Note that frozen image and action embeddings are not used during policy pre-training because the policy is trained in parallel with the encoders. For fine-tuning, we initialize all policy weights, including each ResNet backbone, from the pre-trained checkpoint. This pre-training initialization is essential for achieving high fine-tuning sample efficiency and performance, as shown in \autoref{tab:sota_results}. Our key insight is to pre-train both the encoders and the policy before fine-tuning the policy. See \autoref{fig:model_architecture} for an overview of the model architecture for pre-training and fine-tuning, and refer to Appendix \ref{sec:implementation_details} for implementation details.

\subsection{Theoretical Analysis}
\label{sec:theory}
CLAMP is the first model that applies STRING \citep{schenck2025learning} relative positional encoding (RPE) mechanism with coordinate vectors coming directly from the $(x,y,z)$ coordinates of points from the point cloud (rather than by leveraging depth, as it was the case in the 3D-STRING variants from \citep{schenck2025learning}).

\paragraph{Multi-view setting with STRING} CLAMP solves in an elegant way the problem of modeling the correlations between tokens coming from different 2D-views. Rather than leveraging non-trivial methods for finding matchings between patches from different views, but corresponding to the same or similar regions of 3D space, CLAMP does it implicitly via STRING positional encoding, associating with every token its coordinate vector, pointing to the center of mass of the set of visible points corresponding to the token. Those coordinate vectors are then used in STRING for positional encoding (see: Fig. \ref{fig:string-clamp}). Below, we explain the mechanism in greater depth.

\begin{lemma}
\label{lemma:cayley-lemma}
Consider two tokens: $t_{\mathcal{P}_{1}}$ and $t_{\mathcal{P}_{2}}$ of the image encoder's self-attention module used in CLAMP, corresponding to visible sub-point clouds $\mathcal{P}_{1}$ and $\mathcal{P}_{2}$ respectively. Denote by $(\mathbf{q}_{\mathcal{P}_{1}}$, $\mathbf{k}_{\mathcal{P}_{1}}) \in \mathbb{R}^{d_{QK}} \times \mathbb{R}^{d_{QK}}$ and $(\mathbf{q}_{\mathcal{P}_{2}}$, $\mathbf{k}_{\mathcal{P}_{2}}) \in \mathbb{R}^{d_{QK}} \times \mathbb{R}^{d_{QK}}$ their corresponding query-key vector-pairs and by $\mathbf{r}_{i,j} \in \mathbb{R}^{3}$ a vector joining the center of mass of $\mathcal{P}_{i}$ with the center of mass of $\mathcal{P}_{j}$ for $i,j \in \{1, 2\}$. For a given vector $\mathbf{u} \in \mathbb{R}^{d_{QK}}$, denote by $\mathbf{u}(\mathcal{C})$ its representation in the coordinate system $\mathcal{C}$ (we assume that the canonical system $\mathcal{C}(\mathbf{e}_{1},...,\mathbf{e}_{d_{QK}})$ is the default one, thus: $\mathbf{u}(\mathcal{C}(\mathbf{e}_{1},...,\mathbf{e}_{d_{QK}}))=\mathbf{u}$). Then the not-normalized logit score $s_{i,j}$ of the self-attention module in CLAMP's image encoder is given as follows, for $\alpha(\mathbf{v},\mathbf{w}) \in [0, \pi]$ denoting an angle between vectors $\mathbf{v},\mathbf{w}$, learnable coordinate system $\mathcal{L}$ and linear functions $\delta_{k}$ ($k=1,...,l=\frac{d_{QK}}{2}$; without loss of generality, we assume that query/key dim is even):
\begin{align}
\begin{split}
\label{eq:s_equation}
s_{i,j} = \sum_{k=0}^{l-1} \|\mathbf{q}_{\mathcal{P}_{i}}(\mathcal{L})[2k,2k+1]\|_{2}\|\mathbf{k}_{\mathcal{P}_{j}}(\mathcal{L})[2k,2k+1]\|_{2} \cdot \\
\cdot \cos\left(\alpha(\mathbf{q}_{\mathcal{P}_{i}}(\mathcal{L})[2k,2k+1], \mathbf{k}_{\mathcal{P}_{j}}(\mathcal{L})[2k,2k+1]) + \delta_{k}(\mathbf{r}_{i,j})\right)    
\end{split}
\end{align}
\end{lemma}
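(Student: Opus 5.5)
We start from the definition of Cayley STRING. The query and key of a token with coordinate vector $\mathbf{c}\in\mathbb{R}^{3}$ are transformed as $\mathbf{q}\mapsto \mathbf{R}(\mathbf{c})\mathbf{q}$ and $\mathbf{k}\mapsto \mathbf{R}(\mathbf{c})\mathbf{k}$, where
\[
\mathbf{R}(\mathbf{c})=\mathbf{P}\,\exp\Big(\sum_{m=1}^{3} c_{m}\mathbf{L}_{m}\Big)\mathbf{P}^{\top}.
\]
Here $\mathbf{P}$ is the orthogonal Cayley matrix $\mathbf{P}=(\mathbf{I}-\mathbf{S})(\mathbf{I}+\mathbf{S})^{-1}$ of a learnable antisymmetric $\mathbf{S}$. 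The $\mathbf{L}_{m}$ are commuting antisymmetric generators. In the canonical STRING form, each $\mathbf{L}_{m}$ is block-diagonal with $2\times 2$ blocks $\theta_{k,m}\left(\begin{smallmatrix}0&-1\\1&0\end{smallmatrix}\right)$, $k=0,\dots,l-1$.

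In CLAMP the coordinate vector of token $t_{\mathcal{P}_i}$ is the center of mass $\mathbf{c}_i$ of $\mathcal{P}_i$. This is the averaged global XYZ of the patch described in the Model Architecture paragraph. Consequently $\mathbf{r}_{i,j}=\mathbf{c}_j-\mathbf{c}_i$, up to the sign convention, which only flips $\delta_k$.

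The proof proceeds in the following steps.

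\textbf{Step 1.} Write the unnormalized logit:
\[
s_{i,j}=(\mathbf{R}(\mathbf{c}_i)\mathbf{q}_{\mathcal{P}_i})^{\top}\mathbf{R}(\mathbf{c}_j)\mathbf{k}_{\mathcal{P}_j}=\mathbf{q}_{\mathcal{P}_i}^{\top}\mathbf{P}\,\mathbf{B}(\mathbf{c}_i)^{\top}\mathbf{B}(\mathbf{c}_j)\,\mathbf{P}^{\top}\mathbf{k}_{\mathcal{P}_j},
\]
with $\mathbf{B}(\mathbf{c})=\exp(\sum_m c_m\mathbf{L}_m)$.

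\textbf{Step 2.} Use the fact that the generators commute and are antisymmetric. Then $\mathbf{B}(\mathbf{c})$ is orthogonal, $\mathbf{B}(\mathbf{c}_i)^{\top}=\mathbf{B}(-\mathbf{c}_i)$, and $\mathbf{B}(-\mathbf{c}_i)\mathbf{B}(\mathbf{c}_j)=\mathbf{B}(\mathbf{c}_j-\mathbf{c}_i)=\mathbf{B}(\mathbf{r}_{i,j})$. This is the RPE/translation-invariance property of STRING. It shows that the positional dependence is only through $\mathbf{r}_{i,j}$.

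\textbf{Step 3.} Define the learnable coordinate system $\mathcal{L}$ as the orthonormal basis given by the columns of $\mathbf{P}$. Then $\mathbf{u}(\mathcal{L})=\mathbf{P}^{\top}\mathbf{u}$, and we obtain
\[
s_{i,j}=\mathbf{q}_{\mathcal{P}_i}(\mathcal{L})^{\top}\mathbf{B}(\mathbf{r}_{i,j})\,\mathbf{k}_{\mathcal{P}_j}(\mathcal{L}).
\]

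\textbf{Step 4.} Use the block structure. $\mathbf{B}(\mathbf{r})$ is block-diagonal with $2\times2$ rotation blocks by angles
\[
\delta_k(\mathbf{r})=\sum_{m=1}^{3}\theta_{k,m}r_m,
\]
which are linear in $\mathbf{r}$. Hence $s_{i,j}$ splits into a sum over $k$ of $\mathbf{a}_k^{\top}\mathrm{Rot}(\delta_k)\mathbf{b}_k$, where $\mathbf{a}_k,\mathbf{b}_k$ are the $[2k,2k+1]$ slices of $\mathbf{q}_{\mathcal{P}_i}(\mathcal{L})$ and $\mathbf{k}_{\mathcal{P}_j}(\mathcal{L})$.

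\textbf{Step 5.} Apply an elementary planar identity. Write $\mathbf{a}_k,\mathbf{b}_k$ in polar form. Rotating $\mathbf{b}_k$ by $\delta_k$ changes its polar angle by $\delta_k$, so
\[
\mathbf{a}_k^{\top}\mathrm{Rot}(\delta_k)\mathbf{b}_k=\|\mathbf{a}_k\|\|\mathbf{b}_k\|\cos(\phi_b-\phi_a+\delta_k).
\]
This gives \eqref{eq:s_equation}.

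The main obstacle is the mismatch between the signed angle $\phi_b-\phi_a$ and the unsigned $\alpha\in[0,\pi]$ in the statement. Since $\cos$ is even, $\cos(\pm\alpha+\delta_k)$ matches $\cos(\alpha+\delta_k)$ only after possibly replacing $\delta_k$ by $-\delta_k$. That replacement depends on the orientation of the pair, which would break linearity.

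I would resolve this as follows. Fix the orientation in each plane so that the signed angle is used, and interpret $\alpha$ as the oriented angle taken modulo $2\pi$. Alternatively, note that the sign can be absorbed into the orientation of the basis vectors of $\mathcal{L}$ in that plane. I would state this convention explicitly in the proof.

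A secondary point to check is that the general Cayley-STRING generators can be simultaneously block-diagonalized into this canonical form. I would invoke the corresponding result of \citep{schenck2025learning} that commuting antisymmetric generators admit a common orthogonal change of basis, which is absorbed into $\mathbf{P}$.
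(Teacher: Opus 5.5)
Your argument is correct and is essentially the paper's proof: write Cayley-STRING as $\mathbf{P}\,\mathrm{RoPE}(\cdot)\,\mathbf{P}^{\top}$, use $\mathbf{P}^{\top}\mathbf{P}=\mathbf{I}$ and $\mathrm{RoPE}(\mathbf{r}_i)^{\top}\mathrm{RoPE}(\mathbf{r}_j)=\mathrm{RoPE}(\mathbf{r}_j-\mathbf{r}_i)$, and take $\mathcal{L}$ to be the frame given by $\mathbf{P}^{\top}$. The paper stops at that point and leaves your Steps 4--5 implicit; the sign issue you flag is real and the paper skips it, and your oriented-angle convention is the right fix, whereas absorbing the sign into the orientation of $\mathcal{L}$ would make $\mathcal{L}$ (and $\delta_k$) depend on the token pair rather than being a single learnable frame.
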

Lemma \ref{lemma:cayley-lemma} sheds a light on the mechanism used in CLAMP to correlate tokens corresponding to sub-point clouds within different distances from each other (proof in Appendix \ref{cayley-proof}).

\textbf{Tokens encoding the same sub-point cloud:} In that scenario, $\mathbf{r}_{i,j}=0$ and thus $\delta_{k}(\mathbf{r}_{i,j})=0$ (since $\delta_{k}$ is a linear function). Since coordinate system change is an isometry, by Lemma \ref{lemma:cayley-lemma}, it is easy to see that the formula for $s_{i,j}$ reduces to: $s_{i,j} = \mathbf{q}_{\mathcal{P}_{i}}^{\top}\mathbf{k}_{\mathcal{P}_{j}}$. Thus we obtain a regular attention score.

It turns out that STRING method applied in CLAMP has the ability to modulate large attention values of the regular attention mechanism in such a way that larger distances $\|\mathbf{r}_{i,j}\|_{2}$ between centers of the sub-point clouds imply smaller modulated attention scores. Intuitively, this is a very useful property, since the mechanism is capable of implicitly learning the 3D structure of the input, correlating tokens from different views, but corresponding to the same or similar regions of space (by explicitly reducing attention scores for sub-point clouds within larger distances from each other). The following is true:

\begin{lemma}
\label{lemma:decreasing}
Take some $T>0$ and $\phi \in [0, \frac{\pi}{3}]$. Consider two sub-point clouds $\mathcal{P}_{i}$ and $\mathcal{P}_{j}$ with the corresponding vector $\mathbf{r}_{i,j}$ joining their centers of masses. Take a \textit{translated} version $\widehat{\mathcal{P}}_{j}(\mathbf{r})$ of $\mathcal{P}_{j}$ such that the vector joining the centers of masses becomes $\mathbf{r}=\eta\mathbf{r}_{i,j}$ for some $\eta \in \mathbb{R}$. Assume that the query $\mathbf{q}_{\mathcal{P}_{i}} \in \mathbb{R}^{d_{QK}}$ and key $\mathbf{k}_{\mathcal{P}_{j}} \in \mathbb{R}^{d_{QK}}$ are taken from the ball $\mathcal{B}(0, R)$ centered at zero and of radius $R$ and that the absolute values of all their dimensions are lower-bounded by some $r>0$. Then there exists $\epsilon(R, r, \phi)>0$ and a linear function $\delta=\delta(\mathbf{r})$ such that the logit score $s_{i,j}(\mathbf{r})$ corresponding to $\mathcal{P}_{i}$ and $\widehat{\mathcal{P}}_{j}(\mathbf{r})$ is a decreasing function of $\|\mathbf{r}\|_{2}$ for $|\eta| \in [2T \phi, T(\pi-\phi)]$ if the cosine similarity between $\mathbf{q}_{\mathcal{P}_{i}}$ and $\mathbf{k}_{\mathcal{P}_{j}}$ is $ \geq 1-\epsilon$.
\end{lemma}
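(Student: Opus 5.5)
Our plan is to start from the representation of Lemma~\ref{lemma:cayley-lemma}. Translating $\mathcal{P}_j$ changes only the vector joining the centers of mass, not the query $\mathbf{q}_{\mathcal{P}_i}$ or the key $\mathbf{k}_{\mathcal{P}_j}$. With $\mathbf{r}=\eta\mathbf{r}_{i,j}$ we therefore get
\[
s_{i,j}(\mathbf{r})=\sum_{k=0}^{l-1} a_k b_k \cos\left(\theta_k+\eta\,\delta_k(\mathbf{r}_{i,j})\right),
\]
where $a_k,b_k$ are the norms of the $k$-th $2$-blocks of $\mathbf{q}_{\mathcal{P}_i}(\mathcal{L})$ and $\mathbf{k}_{\mathcal{P}_j}(\mathcal{L})$, and $\theta_k\in[0,\pi]$ are the corresponding angles. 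Here we use the linearity of $\delta_k$. We are free to choose the linear functions, so we take every $\delta_k$ to be the same linear functional $\delta$, normalized so that $\delta(\mathbf{r}_{i,j})=1/T$ (with sign adjusted for $\eta<0$, using that $\|\mathbf{r}\|_2=|\eta|\|\mathbf{r}_{i,j}\|_2$). We also take $\mathcal{L}$ to be the canonical system, which is allowed because it is one particular learnable choice. Then $s$ becomes a function of $x=|\eta|/T\in[2\phi,\pi-\phi]$:
\[
s(x)=\sum_k a_kb_k\cos(\theta_k+x),
\]
and it suffices to show $s'(x)=-\sum_k a_kb_k\sin(\theta_k+x)<0$ on this interval.

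The key step is to use the cosine-similarity assumption to force every $\theta_k$ to be small. We have $\mathbf{q}^\top\mathbf{k}=\sum_k a_kb_k\cos\theta_k$, and by Cauchy--Schwarz $\|\mathbf{q}\|\|\mathbf{k}\|\ge\sum_k a_kb_k$. So if the cosine similarity is $\ge 1-\epsilon$, then
\[
\sum_k a_kb_k(1-\cos\theta_k)\le \sum_k a_k b_k-(1-\epsilon)\|\mathbf{q}\|\|\mathbf{k}\|\le \epsilon\|\mathbf{q}\|\|\mathbf{k}\|\le \epsilon R^2 .
\]
Each block norm is bounded below: $a_k,b_k\ge r\sqrt2$, because each coordinate has absolute value at least $r$. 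Hence every term satisfies
\[
1-\cos\theta_k\le \frac{\epsilon R^2}{2r^2}.
\]
Choosing $\epsilon(R,r,\phi)$ small enough that $\frac{\epsilon R^2}{2r^2}\le 1-\cos\phi$ gives $\theta_k\le\phi$ for all $k$.

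With all $\theta_k\in[0,\phi]$ and $x\in[2\phi,\pi-\phi]$, we get $\theta_k+x\in[2\phi,\pi]$. So $\sin(\theta_k+x)\ge0$, with strict positivity except at the single endpoint where $\theta_k=\phi$ and $x=\pi-\phi$. Therefore $s'(x)\le0$, and $s'(x)<0$ away from finitely many points, so $s$ is strictly decreasing in $x$, hence in $\|\mathbf{r}\|_2$. If one wants strict inequality everywhere, shrink $\epsilon$ further so that $\theta_k<\phi$. The assumption $\phi\le\pi/3$ guarantees the interval $[2\phi,\pi-\phi]$ is nonempty.

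The main obstacle is the block-wise control of $\theta_k$ from a global cosine-similarity bound. This is exactly where the lower bound $r$ on the coordinates is needed: it prevents a block with tiny norm from carrying a large angle. The rest of the argument only requires correctly matching the sign and scale of the linear functional $\delta$ to the parametrization $\mathbf{r}=\eta\mathbf{r}_{i,j}$.
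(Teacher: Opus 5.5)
Your argument is correct in substance, but you obtain $\theta_k\le\phi$ for every block from the global cosine similarity by a different argument than the paper.

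\textbf{The paper's route.} It introduces $f(\mathbf{x},\mathbf{y})$, the maximum over $k$ of the angle between the $k$-th $2$-blocks. It takes the Lipschitz constant $L(r)$ of $f$ on the region $\{\mathbf{x},\mathbf{y}\in\mathcal{B}(0,R),\ |\mathbf{x}[i]|,|\mathbf{y}[i]|\ge r\}$. It then asserts that all $\alpha_k\le\phi$ once $L(r)\cdot 2R\sin(\frac{\arccos(1-\epsilon)}{2})\le\phi$. Presumably this comes from comparing $\mathbf{q}$ with the rescaled key $\|\mathbf{q}\|_2\mathbf{k}/\|\mathbf{k}\|_2$: $f$ is unchanged by the rescaling, and that key lies within this distance of $\mathbf{q}$.

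\textbf{Your route.} You use the exact identity $\mathbf{q}^\top\mathbf{k}=\sum_k a_kb_k\cos\theta_k$ together with $\sum_k a_kb_k\le\|\mathbf{q}\|_2\|\mathbf{k}\|_2$. This bounds each nonnegative term $a_kb_k(1-\cos\theta_k)$ by $\epsilon R^2$, and you then divide by $a_kb_k\ge 2r^2$.

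\textbf{What each buys.} Your route is elementary and gives an explicit $\epsilon=2r^2(1-\cos\phi)/R^2$. It makes clear that only lower bounds on the block norms matter. It also avoids details the paper leaves implicit: why $L(r)$ is finite for an angle function, and why the rescaled key and the path to it stay inside the non-convex region $|x_i|\ge r$. The paper's argument is a soft continuity argument that does not use the additive block structure of the inner product, and it gives no usable value of $\epsilon$. The final monotonicity step is the same in both, with one linear $\delta$ satisfying $\delta(\eta\mathbf{r}_{i,j})=\eta/T$ on every block. Your computation of $s'(x)$ makes the paper's one-line claim precise, including strictness.

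\textbf{First fix: the sign of $\eta$.} ``Sign adjusted for $\eta<0$'' does not give a linear function. Linearity forces $\delta(\eta\mathbf{r}_{i,j})=\eta/T$, and flipping the sign on the negative ray would give $\delta(\mathbf{r})=\|\mathbf{r}\|_2/(T\|\mathbf{r}_{i,j}\|_2)$, which is not linear. Keep the single linear $\delta$ (the paper's $\delta(\mathbf{r})=\frac{1}{T}\mathbf{r}^\top\mathbf{r}_{i,j}/\|\mathbf{r}_{i,j}\|_2^2$) and handle $\eta<0$ directly. By evenness of cosine, $s(x)=\sum_k a_kb_k\cos(x-\theta_k)$ with $x-\theta_k\in[\phi,\pi-\phi]$. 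Hence $s'(x)=-\sum_k a_kb_k\sin(x-\theta_k)<0$. This is the only place a lower bound on $x$ is needed, namely $x\ge\phi\ge\theta_k$.

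\textbf{Second fix: the case $\phi=0$.} Your $\epsilon$ vanishes at $\phi=0$, and so does the paper's. This cannot be avoided. With $\delta(\eta\mathbf{r}_{i,j})=\eta/T$, on the positive ray one has $s'(\pi)=\sum_k a_kb_k\sin\theta_k>0$ whenever some $\theta_k>0$. So the case $\phi=0$ must be excluded in either proof.
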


\begin{proof}
Consider a function: $f: \mathbb{R}^{d_{QK}} \times \mathbb{R}^{d_{QK}} \rightarrow \mathbb{R}$ defined as follows:   
\begin{equation}
f(\mathbf{x},\mathbf{y}) = \max_{k=0}^{\frac{d_{QK}}{2}-1} 
\arccos(\frac{\mathbf{x}[2k,2k+1]^{\top}\mathbf{y}[2k, 2k+1]}{\|\mathbf{x}[2k, 2k+1]\|_{2}\|\mathbf{y}[2k, 2k+1]\|_{2}})
\end{equation}
Denote by $L(r)>0$ the Lipschitz constant of the function $f$ in the region 
$\{(\mathbf{x},\mathbf{y}):\mathbf{x},\mathbf{y} \in \mathcal{B}(0, R), |\mathbf{x}[i]|,|\mathbf{y}[i]| \geq r; i=0,1,...,d_{QK}-1\}$.
Note that since $r>0$, $L(r)$ is well defined (i.e. is finite). Straightforward trigonometric analysis then leads to the conclusion that if $\epsilon>0$ is chosen to be small enough such that: $L(r) \cdot 2R\sin(\frac{\arccos(1-\epsilon)}{2}) \leq \phi$ then the following holds:
each angle $\alpha_{k} \in [0, \pi]$ between $\mathbf{q}_{\mathcal{P}_{i}}[2k,2k+1]$ and $\mathbf{k}_{\mathcal{P}_{j}}[2k,2k+1]$ for $k=0,1,...,\frac{d_{QK}}{2}$ is upper-bounded by $\phi$. Define $\delta(\mathbf{r})$ as follows: $\delta(\mathbf{r}) = \frac{1}{T}\frac{\mathbf{r}^{\top}\mathbf{r}_{i,j}}{\|\mathbf{r}_{i,j}\|^{2}_{2}}$ (in other words: $\delta(\mathbf{r})=\frac{1}{T}\eta$). But then for $|\eta| \in [2T\phi,T(\pi-\phi)]$, the $\cos$-expression from Eq. \ref{eq:s_equation} is a decreasing function of $\|\mathbf{r}\|_{2}$ (note that the vanishing $\cos$-expression corresponds to the regular logit score $s_{i,j}=\mathbf{q}_{\mathcal{P}_{i}}^{\top}\mathbf{k}_{\mathcal{P}_{j}}$, as discussed above). That completes the proof.
\end{proof}

Lemma \ref{lemma:decreasing} quantifies the intuition that high attention scores between tokens for which the corresponding sub-point clouds are close do not change much when the STRING-based modulation is applied, whereas those for pairs of tokens with larger distances can significantly decrease. 
Lemma \ref{lemma:decreasing} does it in the curated setting with one query-key pair and varying distances via the translation operation, whereas in the most general setting, we have many pairs and the analysis is more convoluted. Note however that the fact that angles $\alpha_{k}$ can correspond to the learnable (rather than canonical, as in the proof of Lemma \ref{lemma:decreasing}) coordinate system $\mathcal{L}$ (see: Lemma \ref{lemma:cayley-lemma}) makes $\delta$ learning easier. Even random $\mathcal{L}$ are useful, since they tend to distribute the norm of the queries/keys uniformly across different dimensions. This leads to the less constrained statements (for instance, the assumption about the lower bound on the absolute values of query/key dimensions is not needed).

\begin{table*}[t]
\centering
\caption{Results from simulation experiments comparing \method against five baselines, averaged over three training seeds. The `\method Encoders Only' variants pre-train only the encoders, while the policy is fine-tuned from scratch.}
\label{tab:sota_results}
\setlength{\tabcolsep}{6pt} %
\begin{tabular}{lcccccc}
\toprule
\multirow{2}{*}{\textbf{Method}} & \textbf{Can Opener} & \textbf{Screwdriver} & \textbf{Pen in} & \textbf{Mug on} & \textbf{Plate on} & \textbf{Drawer} \\
 & \textbf{in Caddy} & \textbf{in Caddy} & \textbf{Container} & \textbf{Plate} & \textbf{Rack} & \textbf{Open} \\
\midrule
VLA Backbone & 7.3\% & 38.0\% & 2.0\% & 14.7\% & 56.7\% & 33.3\% \\
ALOHA Unleashed & 68.0\% & 78.0\% & 16.0\% & 54.0\% & 76.0\% & 80.7\% \\
Pretrained ALOHA Unleashed & 70.0\% & 86.7\% & 20.7\% & 62.0\% & 76.7\% & 88.7\% \\
ACT & 34.0\% & 82.0\% & 9.3\% & 38.7\% & 70.0\% & 44.7\% \\
Pretrained ACT & 7.3\% & 20.0\% & 0.7\% & 6.0\% & 29.3\% & 16.7\% \\
\midrule
ACT with \method Encoders Only & 72.7\% & 92.0\% & 10.7\% & 62.7\% & 88.7\% & 28.0\% \\
ALOHA Unleashed with \method Encoders Only & 71.3\% & 83.3\% & 34.0\% & 86.0\% & 88.7\% & 82.7\% \\
ALOHA Unleashed with \method & \textbf{94.0\%} & \textbf{98.0\%} & \textbf{51.3\%} & \textbf{95.3\%} & \textbf{92.7\%} & \textbf{95.3\%} \\
\bottomrule
\end{tabular}
% \vspace{-1.5em}
\end{table*}

\begin{figure*}[t] %
    \centering
    
    \subfloat{
        \includegraphics[width=0.3\linewidth]{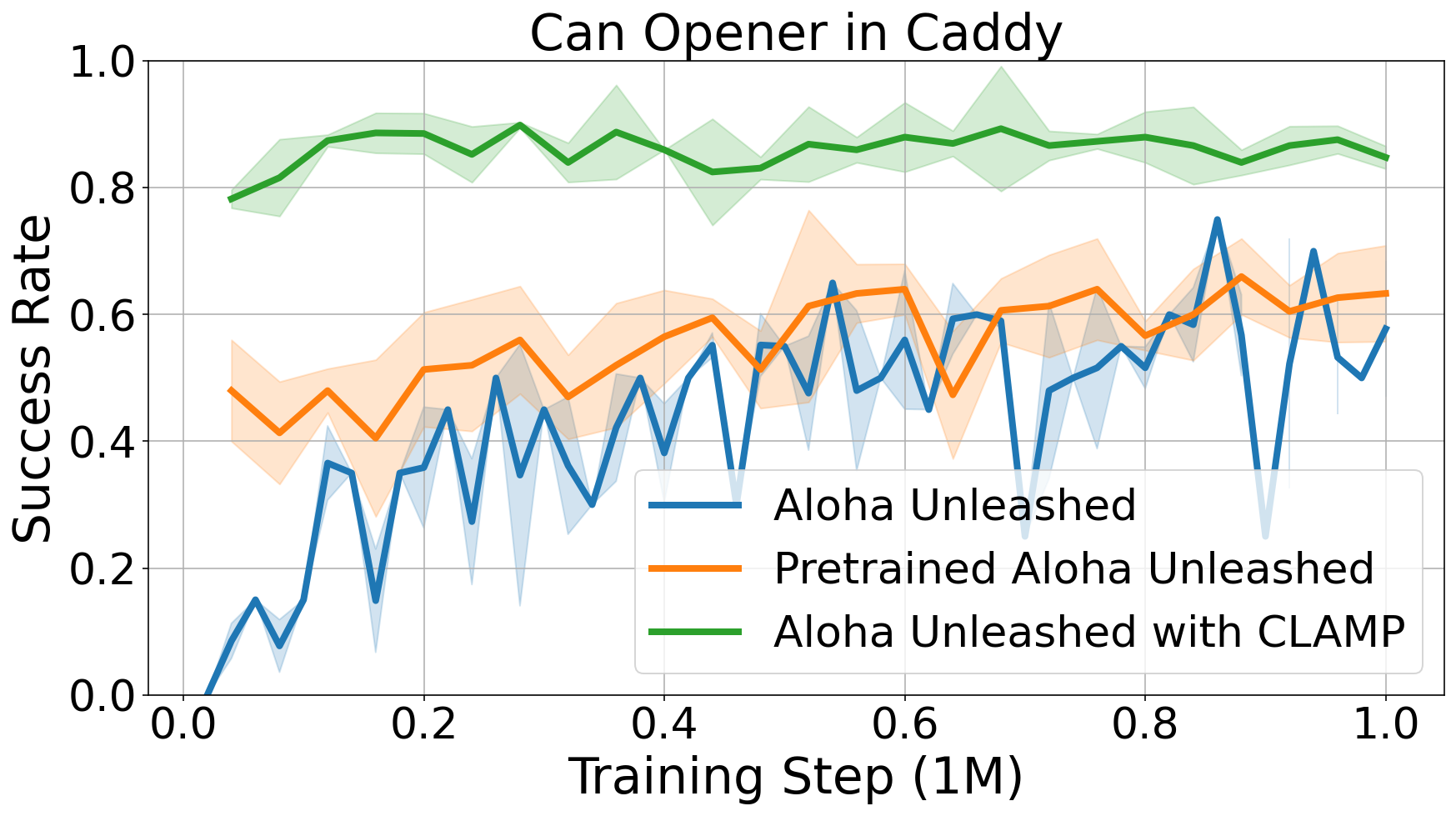}
    }
    \hfill
    \subfloat{
        \includegraphics[width=0.3\linewidth]{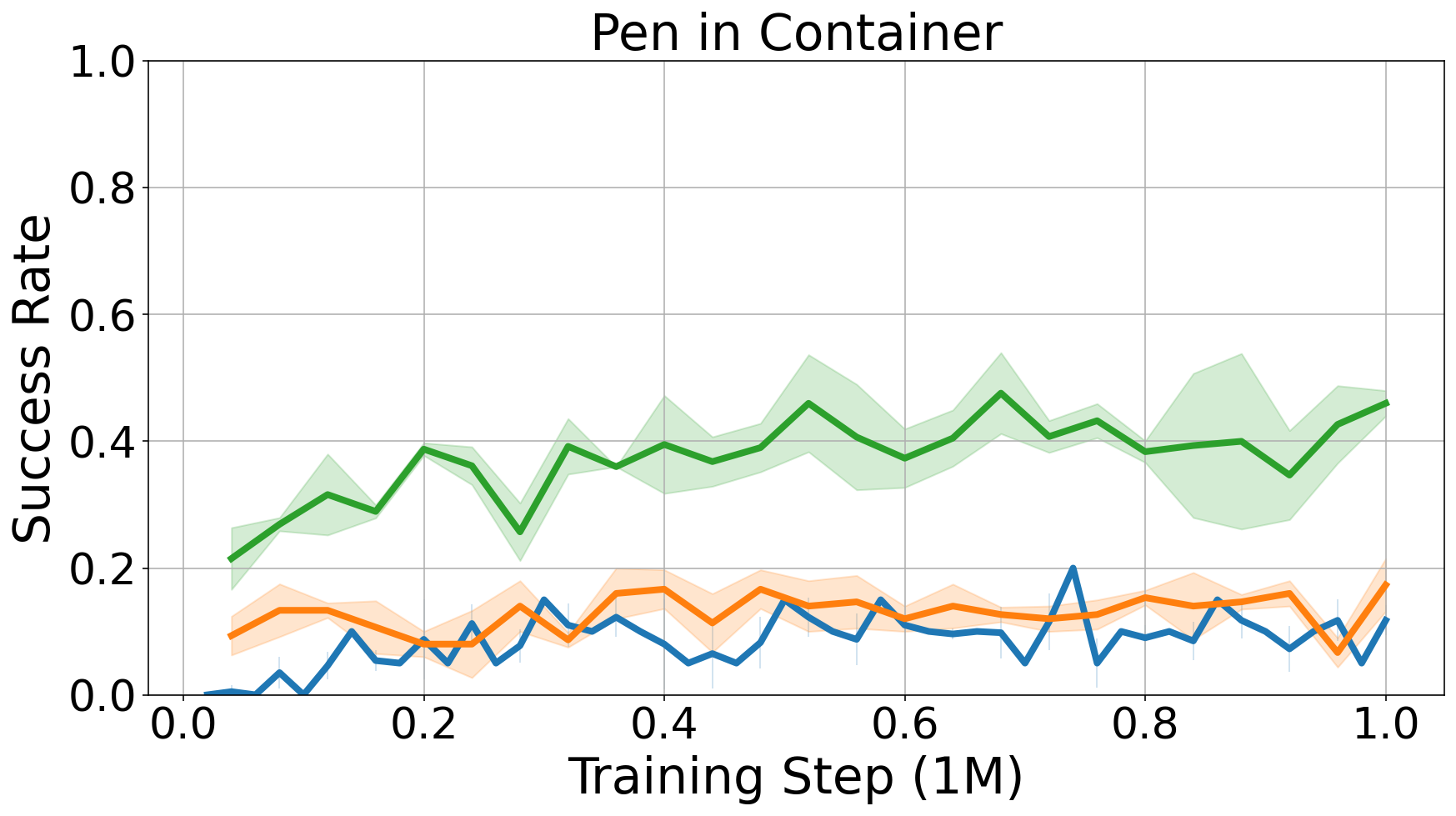}
    }
    \hfill
    \subfloat{
        \includegraphics[width=0.3\linewidth]{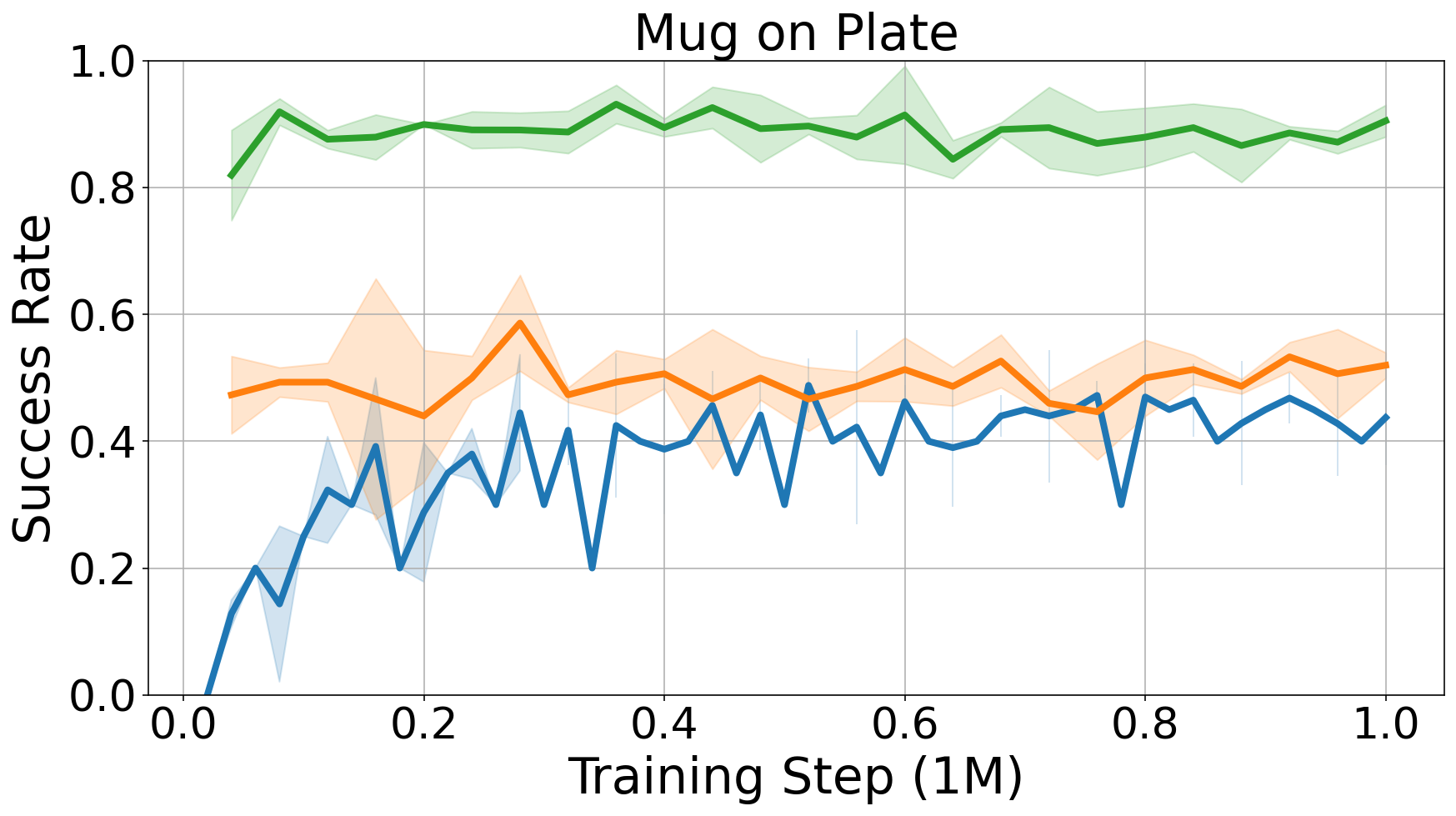}
    }
    
    \subfloat{
        \includegraphics[width=0.3\linewidth]{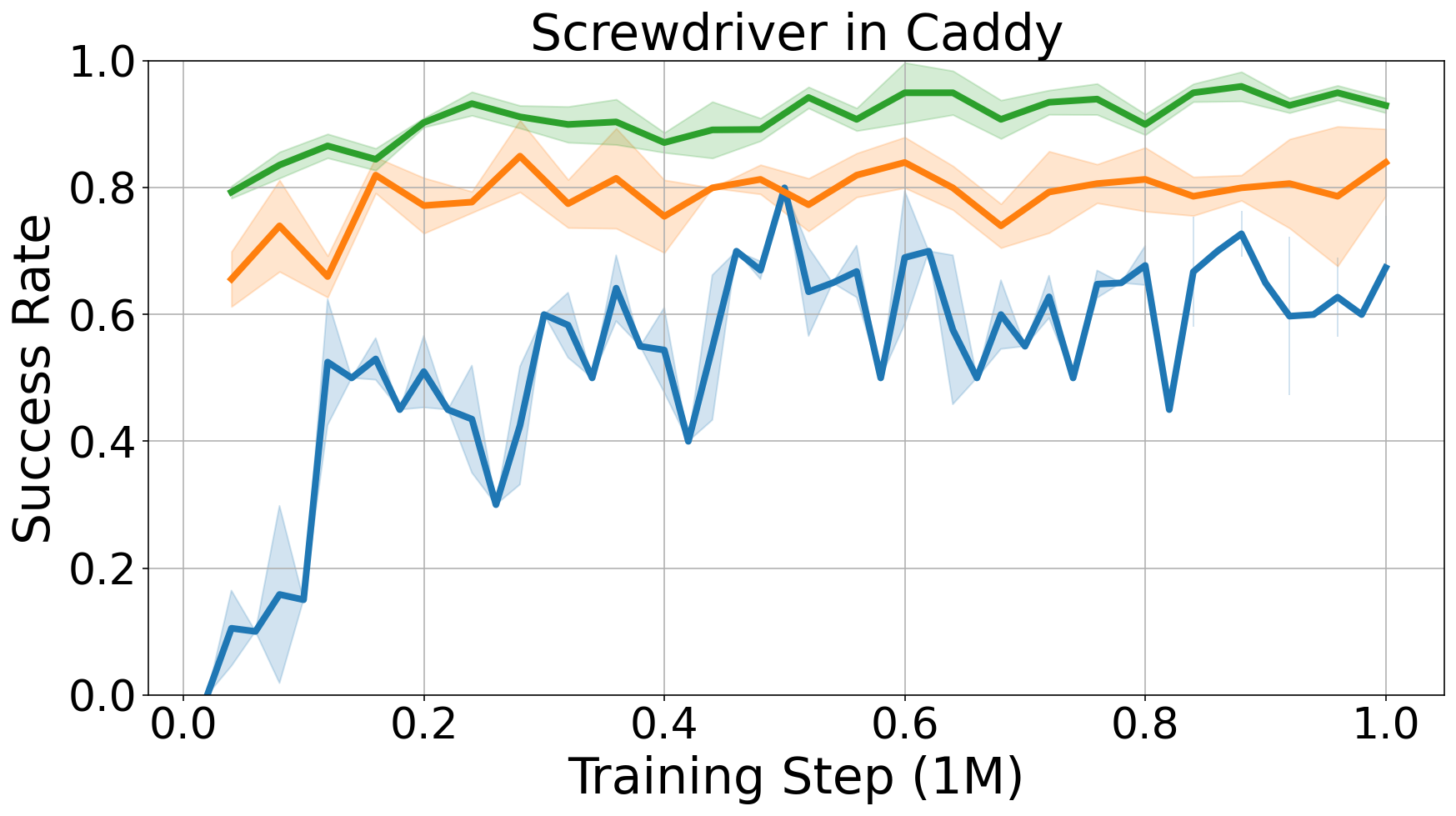}
    }
    \hfill
    \subfloat{
        \includegraphics[width=0.3\linewidth]{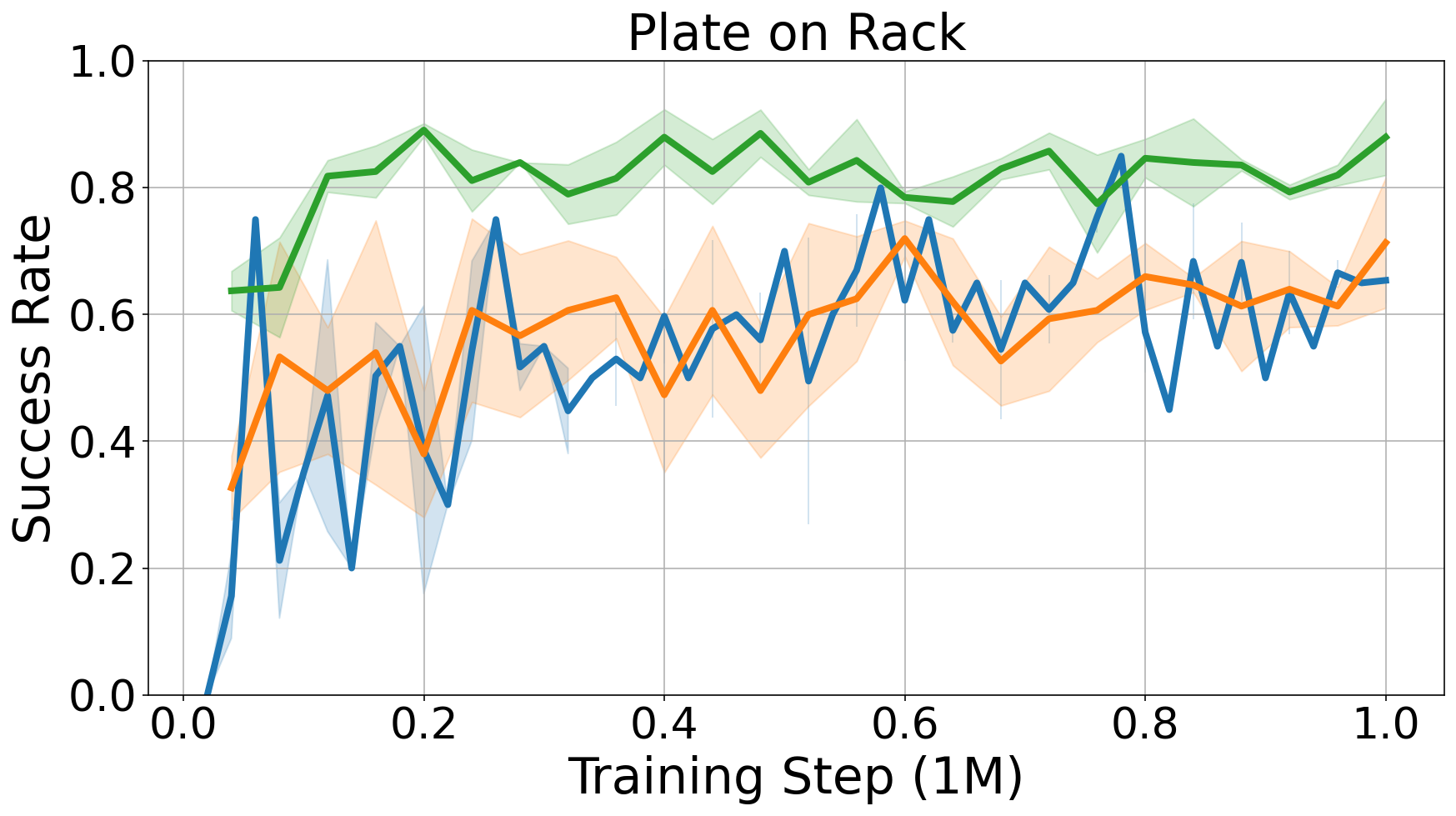}
    }
    \hfill
    \subfloat{
        \includegraphics[width=0.3\linewidth]{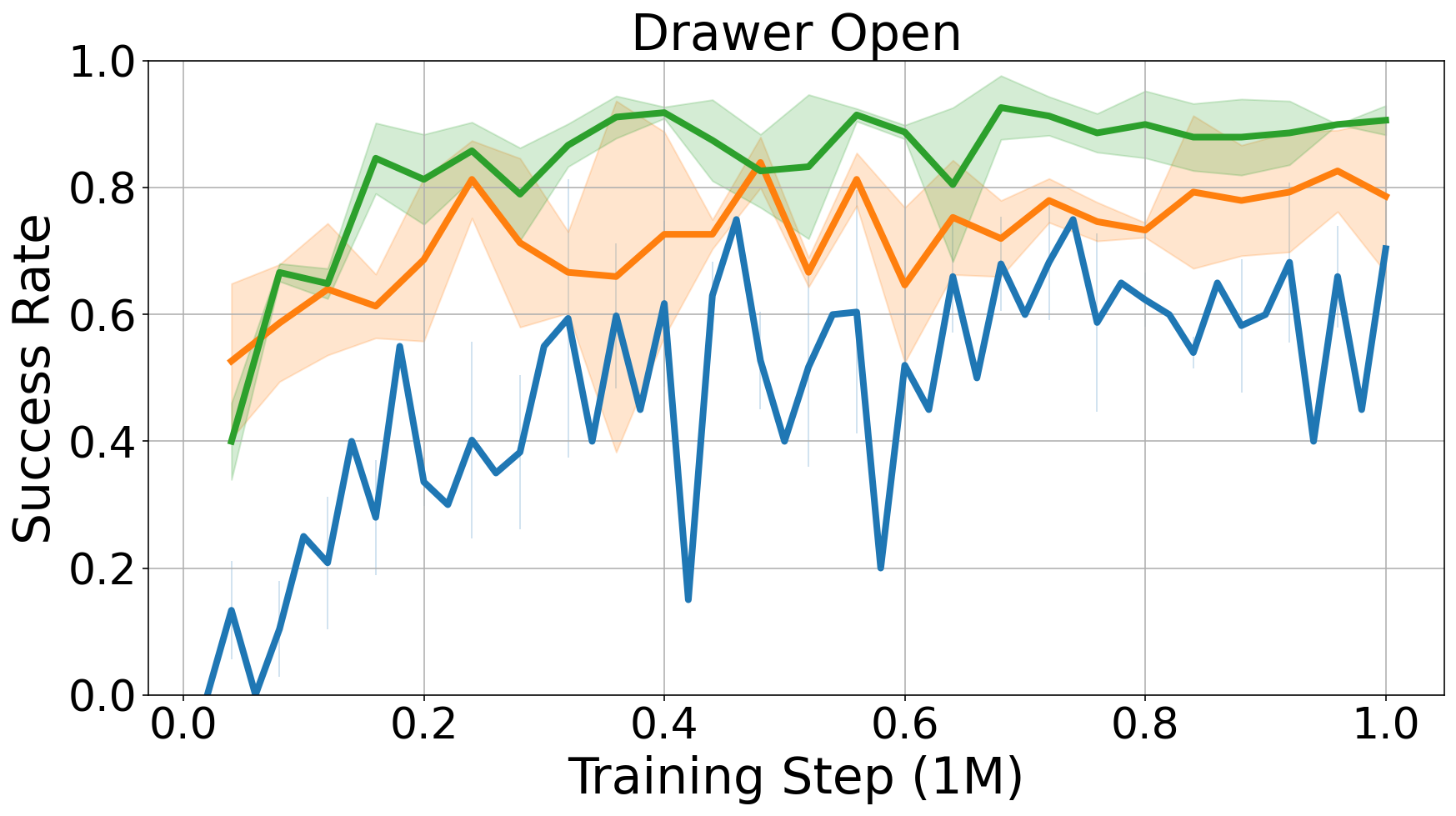}
    }
    
    \caption{Evaluation curves comparing our method (green) to baselines across three seeds in simulation. All methods are trained on 4500 demonstrations for 1M environment steps and evaluated for 50 trials per checkpoint every 40K steps.}
    \label{fig:eval_curves}
    % \vspace{-1.5em}
\end{figure*}

\section{Evaluation}

We utilized the ALOHA 2 model from Mujoco Menagerie \cite{todorov2012mujoco, zakka2022mujoco} for our simulation environment, the same model used in ALOHA Unleashed \cite{zhao2024aloha}.
This is a bimanual tabletop environment with two arms attached on either side of a table.
In simulation, we use a two-camera setup with front-left and back-right views. These cameras are placed on opposite corners of the table, 70 cm above the surface, and pointed at the center of the table. In the real world, we use a four-camera setup consisting of the two views used in simulation and a wrist-mounted camera on each robot arm. All cameras record RGB-D images.
The simulation and real robots were identical.
We follow the same simulation evaluation protocol as in \cite{zhao2024aloha}: we perform 50 evaluations per checkpoint every 40k training steps and report the maximum evaluation score across all checkpoints.
We also report the maximum success rate across all checkpoints for real-world experiments, and we evaluated at most two checkpoints per method per task.

\subsection{Evaluation Tasks}

\textbf{Simulation.} \texttt{Can Opener in Caddy}: Six objects are placed on the table: a caddy, screwdriver, magnifier, can opener, thumb drive, and scissors. They are evenly spaced in the $2 \times 3$ grid. The caddy, screwdriver, and magnifier may appear in any of the three top positions, while the can opener, thumb drive, and scissors may appear in any of the three bottom positions. Objects are randomly rotated by $\pm$90 degrees. The robot must grasp the can opener and place it in the left caddy.

\texttt{Screwdriver in Caddy}: Same setup as \texttt{Can Opener in Caddy}, but objects are randomly rotated by $\pm$180 degrees. The robot must grasp the screwdriver and place it in the right caddy.

\texttt{Pen in Container}: Six objects are placed on the table: a plate, bowl, container, mug, pen, and banana. They are also evenly spaced in a $2 \times 3$ grid. The plate, bowl, and container may appear in any of the three top positions, while the mug, pen, and banana may appear in any of the three bottom positions. Objects are randomly rotated by $\pm$180 degrees. The robot must grasp the pen and place it in the container.

\texttt{Mug on Plate}: Same setup as \texttt{Pen in Container}. The robot must grasp the mug and place it on the plate.

\texttt{Plate on Rack}: Two objects are placed on the table: a plate and drying rack. The drying rack is placed in the top half of the workspace, and the plate in the bottom half of the workspace. The drying rack’s z-axis orientation is randomly varied by $\pm$86.4 degrees. The robot must grasp the plate and place it into one of the dividers in the drying rack.

\texttt{Drawer Open}: A drawer is randomly placed in the workspace and rotated by $\pm$45 degrees. The robot must open the bottom drawer.

\textbf{Real-World.} \texttt{Open Drawers}: a drawer is randomly placed in the workspace and rotated by $\pm$90 degrees. The robot must open both the top and bottom drawers, with partial success awarded if the bottom drawer is opened.

\texttt{Close Drawers}: Same setup as \texttt{Open Drawers}, but with the drawers already open. The robot must close both the top and bottom drawers, with partial success awarded if the top drawer is closed.

\texttt{Mug on Plate}: Same setup as the \texttt{Mug on Plate} task in simulation. The robot must grasp the mug and place it on the plate, with partial success awarded for a successful grasp of the mug.

\texttt{Plate on Rack}: Same setup as the \texttt{Plate on Rack} task in simulation, but the drying rack’s z-axis orientation is randomly varied by $\pm$45 degrees. The robot must grasp the plate and place it into one of the dividers in the drying rack, with partial success awarded for a successful grasp of the plate.

\texttt{Recycle Cans}: Two trash bins are placed at the top-right of the workspace, and three soda cans are placed in the center. The cans are randomly rotated by $\pm$45 degrees. The robot must grasp at least two cans and place them in the bins, with partial success awarded for grasping and placing one can.

See \autoref{fig:real_world_environments} for examples of the initial configurations for each real-world task.

\subsection{Datasets}

The datasets are collected using a combination of a VLA policy in simulation and teleoperation in real, as described in the following sections.

\textbf{Sim Data.} The sim data was collected using a pre-trained VLA (vision-language-action) model. 
Specifically, we used 
Gemini Robotics 1.5 (GR1.5) \cite{gr15},
which takes camera images, a text command, and robot proprioception as inputs and outputs target joint positions.
It generated data across 48 tasks from which we only kept episodes that successfully completed the task.
It averaged 11,533 successful episodes per task for a total of 553,592 episodes to be used for pre-training our encoders (though not uniformly distributed across tasks, see Appendix \ref{appendix:sim_dataset}).
Additionally we generated 4,500 episodes for 6 of the tasks to be used for fine-tuning the policies.

In total, 358,510,758 image–text-action triplets were used for pre-training, while each method was fine-tuned on a single task using 4,500 demonstrations. Note that the pre-training tasks do not overlap with the fine-tuning tasks. See Appendix \ref{appendix:sim_dataset} for a list of pre-training tasks and count of successful episodes per task.

\textbf{Real Data.} Experienced operators used the teleoperation system \cite{zhao2024aloha} to collect demonstrations: 500 each for \texttt{Open Drawers} and \texttt{Close Drawers}, 100 each for \texttt{Mug on Plate} and \texttt{Plate on Rack}, and 63 for \texttt{Recycle Cans}.

\subsection{Baselines}

We compare \method against strong baseline methods: \textbf{VLA Backbone}, \textbf{ALOHA Unleashed} \cite{zhao2024aloha}, \textbf{Pretrained ALOHA Unleashed}, \textbf{Action Chunking with Transformers (ACT)} \cite{Zhao-RSS-23}, and \textbf{Pretrained ACT}.
VLA Backbone is a VLA model with the $\pi_{0}$ architecture \cite{black2024pi0visionlanguageactionflowmodel}, pre-trained on the same dataset used for \method pre-training.
Note that this model is trained from scratch and does not use pretrained weights from Physical Intelligence, the company that produced $\pi_{0}$.
ALOHA Unleashed and ACT are state-of-the-art imitation learning methods for bimanual manipulation, and `Pretrained’ denotes pre-training on the same dataset as \method.

\begin{table*}[t]
\centering
\caption{Real-world experiment results comparing \method with baseline.}
\label{tab:real_results}
\setlength{\tabcolsep}{4pt} %
\begin{tabular}{lcccccccccc}
\toprule
\multirow{2}{*}{\textbf{Method}} & \multicolumn{2}{c}{\textbf{Open Drawers}} & \multicolumn{2}{c}{\textbf{Close Drawers}} & \multicolumn{2}{c}{\textbf{Mug on Plate}} & \multicolumn{2}{c}{\textbf{Plate on Rack}} & \multicolumn{2}{c}{\textbf{Recycle Cans}} \\
\cmidrule(lr){2-3} \cmidrule(lr){4-5} \cmidrule(lr){6-7} \cmidrule(lr){8-9} \cmidrule(lr){10-11}
 & Partial & Success & Partial & Success & Partial & Success & Partial & Success & Partial & Success \\
\midrule
ALOHA Unleashed & 2 / 10 & 0 / 10 & \textbf{8 / 10} & 0 / 10 & 4 / 10 & 3 / 10 & 8 / 10 & \textbf{8 / 10} & 9 / 10 & 5 / 10 \\
ALOHA Unleashed with \method & \textbf{8 / 10} & \textbf{7 / 10} & \textbf{8 / 10} & \textbf{5 / 10} & \textbf{8 / 10} & \textbf{8 / 10} & \textbf{9 / 10} & \textbf{8 / 10} & \textbf{10 / 10} & \textbf{6 / 10} \\
\bottomrule
\end{tabular}
% \vspace{-0.5em}
\end{table*}

\begin{figure*}[t]
    \centering
    \includegraphics[width=\textwidth]{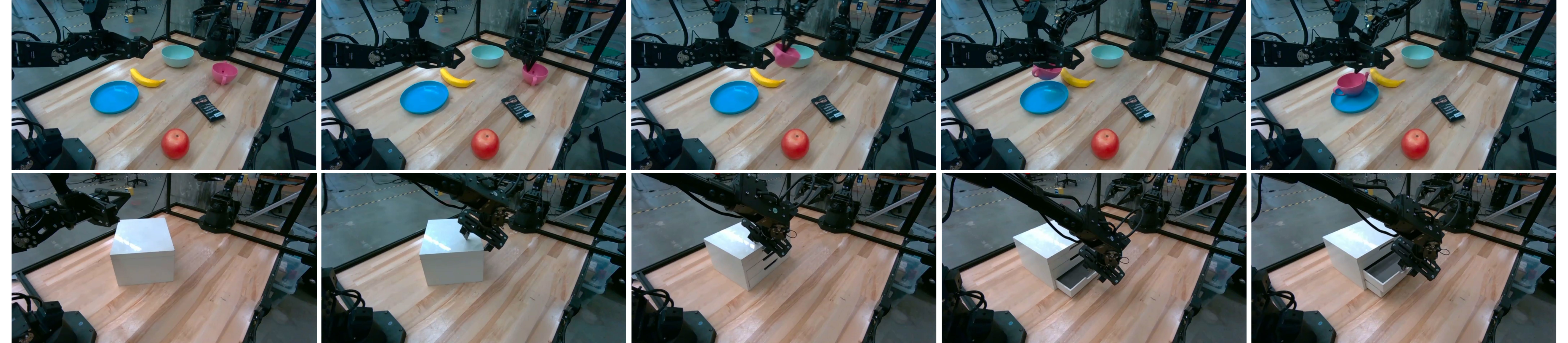}
    \caption{Successful rollouts of our method on the ALOHA robot. Top: \texttt{Mug on Plate}. Bottom: \texttt{Open  Drawers}.}
    \label{fig:real_rollouts}
    % \vspace{-2em}
\end{figure*}

\section{Results}
\label{sec:results}

\subsection{Sim Results}
\autoref{tab:sota_results} reports the maximum success rate across all checkpoints in simulation, averaged over three training seeds, with evaluations conducted over 50 trials every 40k steps.
The top five rows correspond to baseline methods, while the bottom three rows show the same baselines with \method pre-training. ACT with \method pre-training outperforms its non-pretrained counterpart on five of the six tasks. All variants of ALOHA Unleashed with \method pre-training outperform the non-pretrained ALOHA Unleashed across all tasks. Notably, `ALOHA Unleashed with \method' significantly surpasses all baselines on every task.
On the whole, methods with \method pre-training exhibit substantial performance gains, demonstrating the benefits of leveraging learned image and action representations during fine-tuning.
Moreover, we show that pre-training the encoders and the policy in parallel further improves fine-tuning performance.

\autoref{fig:eval_curves} illustrates the evaluation curves of our method (green line) and the baselines, averaged over three training seeds. ALOHA Unleashed with \method pre-training shows strong performance at very early training stages, indicating significant gains in sample efficiency and overall performance due to pre-training. On challenging tasks such as \texttt{Pen in Container}, its performance continues to improve with more training.
% See Appendix \ref{sec:pre_training_results} for the pre-training results.

\begin{table}[t]
\centering
\caption{Ablation experiment results in simulation, averaged over three seeds.}
\label{tab:ablation_experiments}
\setlength{\tabcolsep}{2.2pt}
\renewcommand{\arraystretch}{1.1}
\small
\begin{tabular}{p{0.46\columnwidth}ccc}
\toprule
\textbf{Variation} & \textbf{Can Opener} & \textbf{Pen in} & \textbf{Mug on} \\
 & \textbf{in Caddy} & \textbf{Container} & \textbf{Plate} \\
\midrule
Without virtual wrist views & -2.6\% & -22.0\% & -28.6\% \\
Standard virtual five views \cite{qian20253d} & -9.3\% & -9.3\% & -6.6\% \\
Pre-training w/o text encoder & +0.7\% & -2.0\% & 0.0\% \\
Replace ViT \cite{dosovitskiy2020image} with DP3 \cite{ze20243d} & -15.3\% & -15.3\% & -7.3\% \\
Without STRING \cite{schenck2025learning} & -2.0\% & -3.3\% & -4.6\% \\
Without pre-training encoders & -9.3\% & -7.3\% & -4.6\% \\
Unfrozen encoders (fine-tuning) & -23.3\% & -16.0\% & -3.3\% \\
ACT with \method & -67.4\% & -10.7\% & -59.4\% \\
\bottomrule
\end{tabular}
% \vspace{-2.0em}
\end{table}

\subsection{Real-World Results}

Real-world results are reported in \autoref{tab:real_results}, with example rollouts of our method shown in \autoref{fig:real_rollouts}.
Our method outperforms the baseline on four out of five tasks, while maintaining comparable performance on \texttt{Plate on Rack}.
For \texttt{Open Drawers} and \texttt{Close Drawers}, our method successfully opens and closes both drawers, whereas the baseline struggles with the final drawer. In \texttt{Mug on Plate}, our method shows noticeable improvements in grasping and placing the mug compared to the baseline. For \texttt{Plate on Rack} and \texttt{Recycle Cans}, we observe slightly better grasping performance, while placement performance remains comparable to the baseline. We hypothesize that these tasks are relatively simple and the training data less diverse, limiting gains over the baseline.

% Colab script for this figure: https://colab.corp.google.com/drive/15GK_AwjMb-YRWMOU3NQA5s3tWM4vU_g7#scrollTo=erGrP3A1sj1T
\begin{figure*}[t]
    \centering
    
    % --- Row 1 ---
    \subfloat[Image → Previous Action Recall@5]{
        \includegraphics[width=0.3\linewidth]{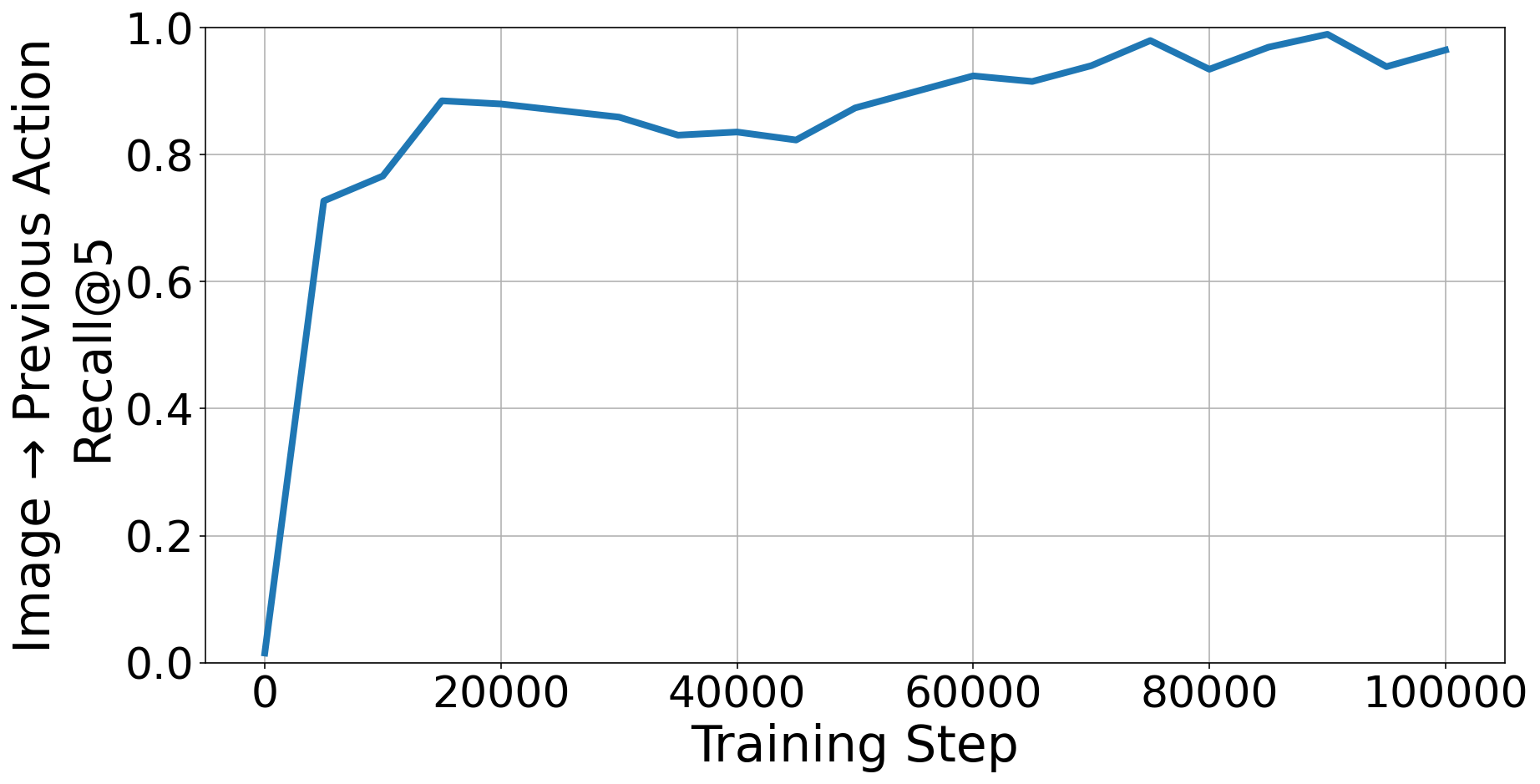}
    }
    \hfill
    \subfloat[Image → Text Recall@5]{
        \includegraphics[width=0.3\linewidth]{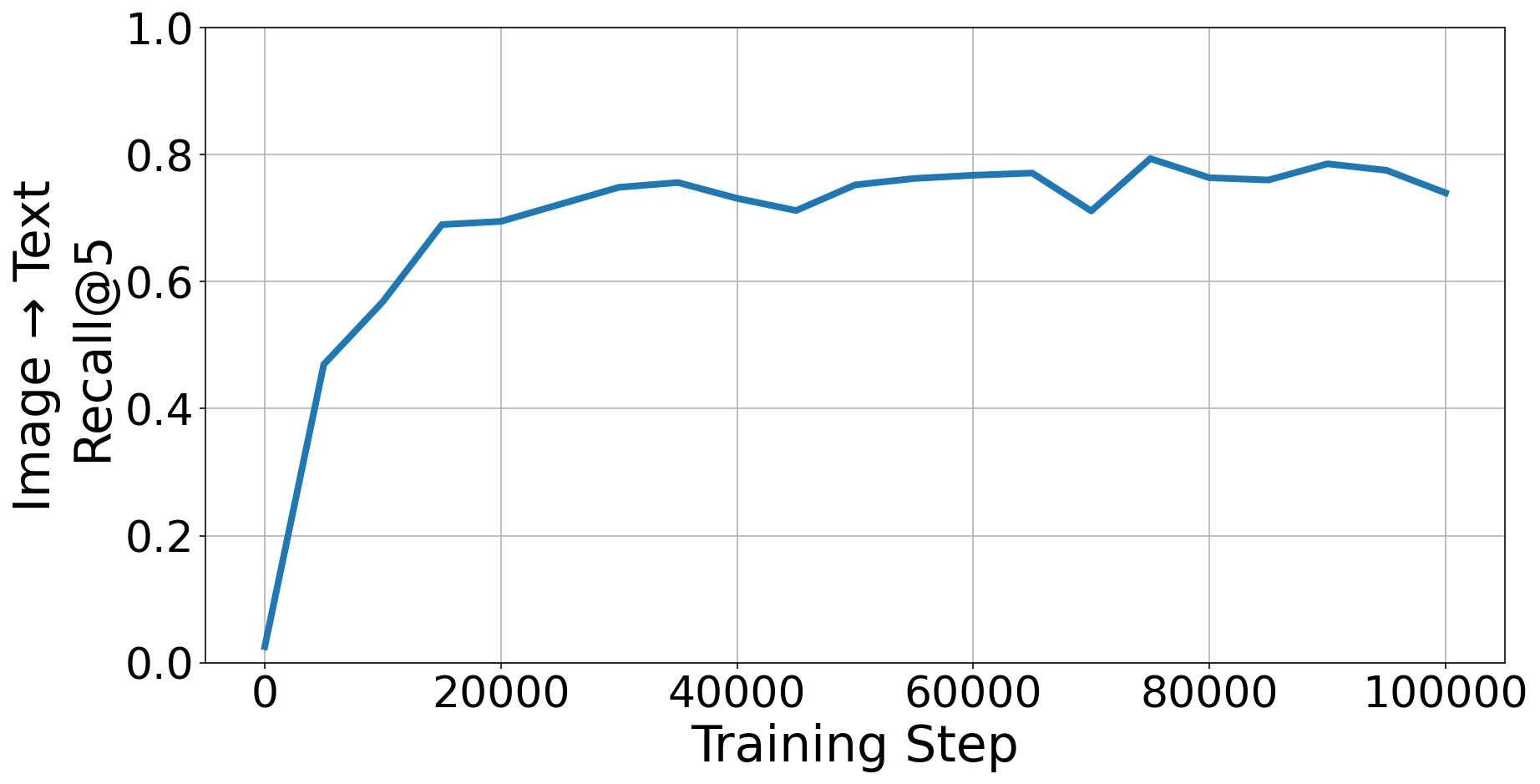}
    }
    \hfill
    \subfloat[Previous Action → Text Recall@5]{
        \includegraphics[width=0.3\linewidth]{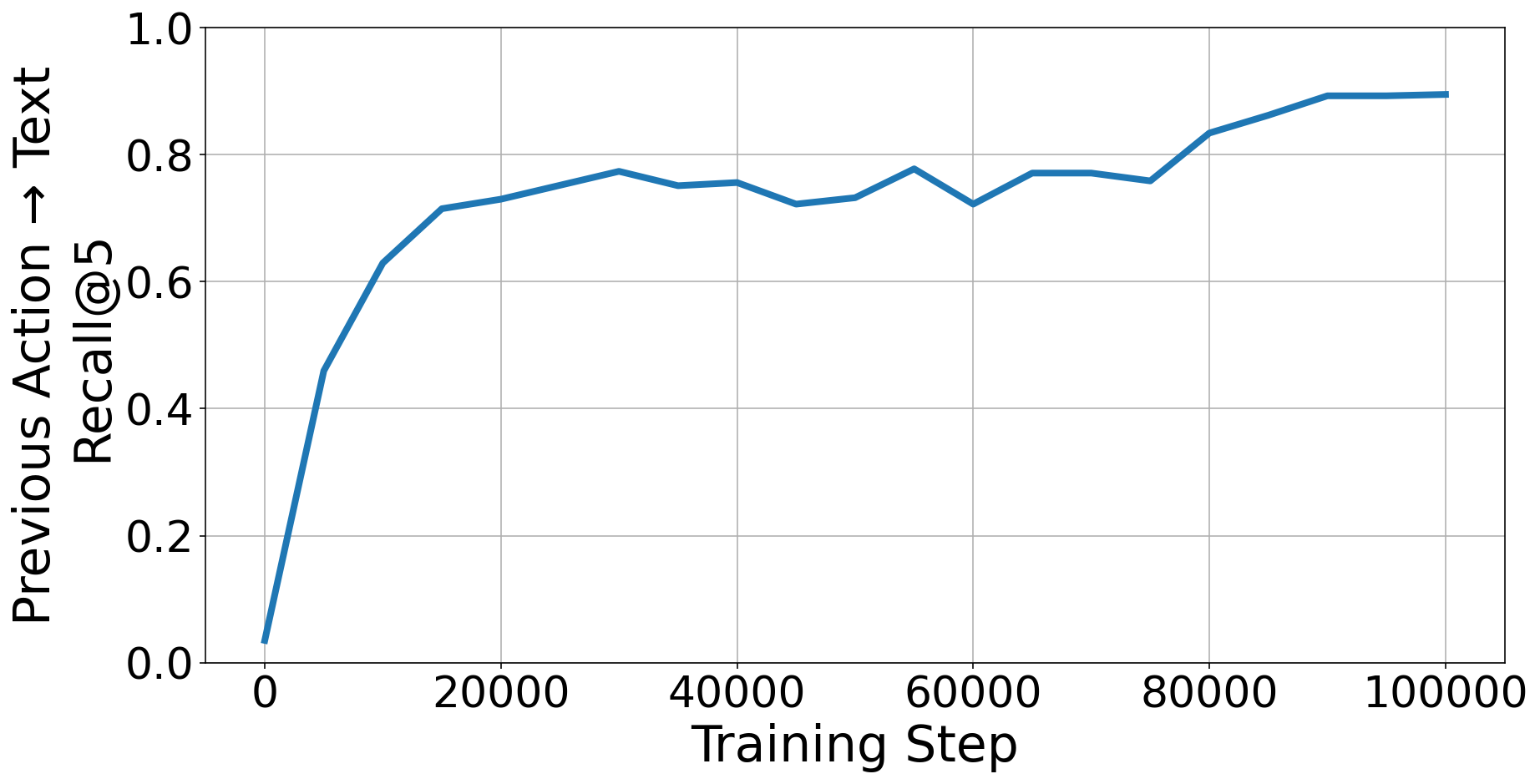}
    }
    
    % --- Row 2 ---
    \subfloat[Previous Action → Image Recall@5]{
        \includegraphics[width=0.3\linewidth]{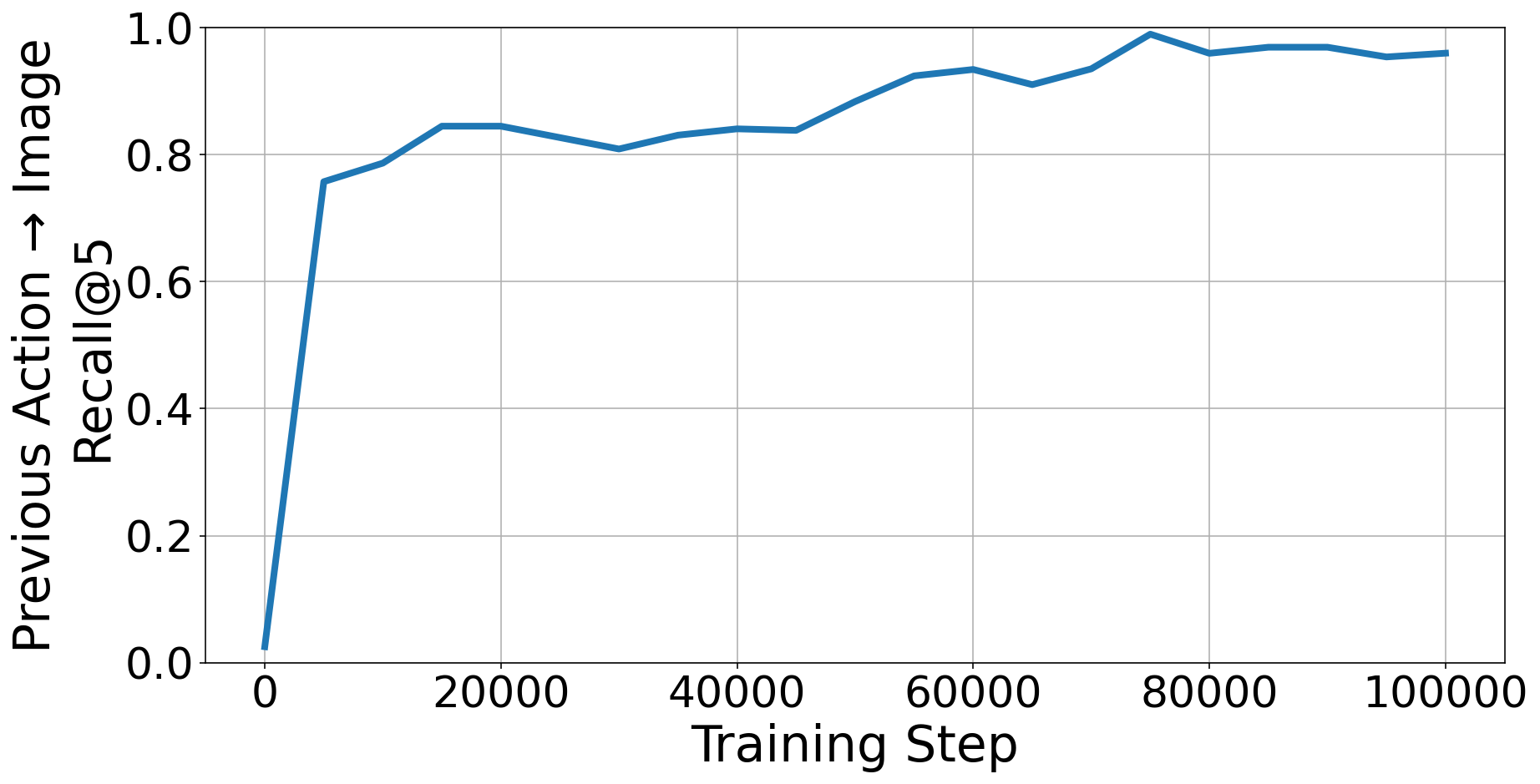}
    }
    \hfill
    \subfloat[Text → Image Recall@5]{
        \includegraphics[width=0.3\linewidth]{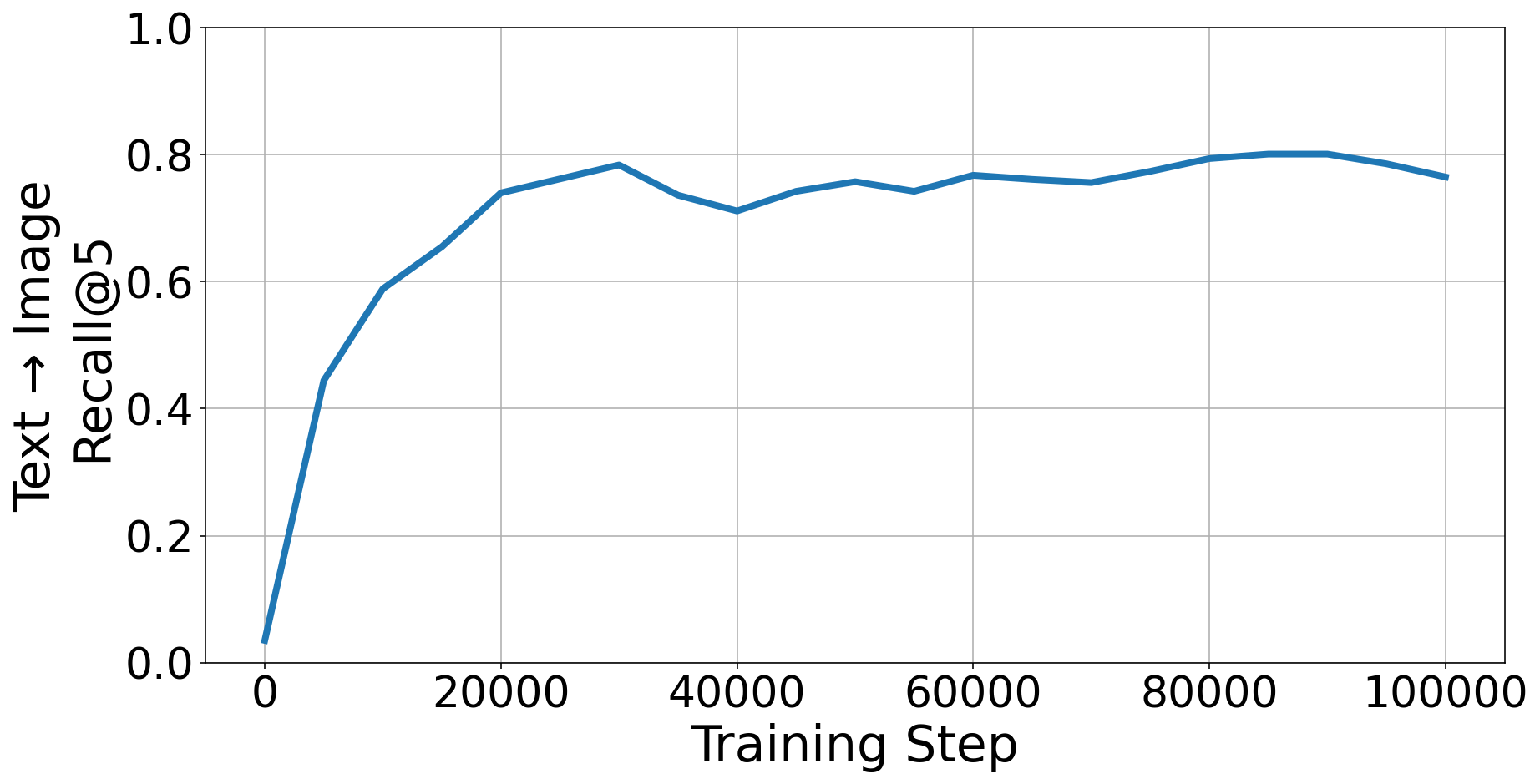}
    }
    \hfill
    \subfloat[Text → Previous Action Recall@5]{
        \includegraphics[width=0.3\linewidth]{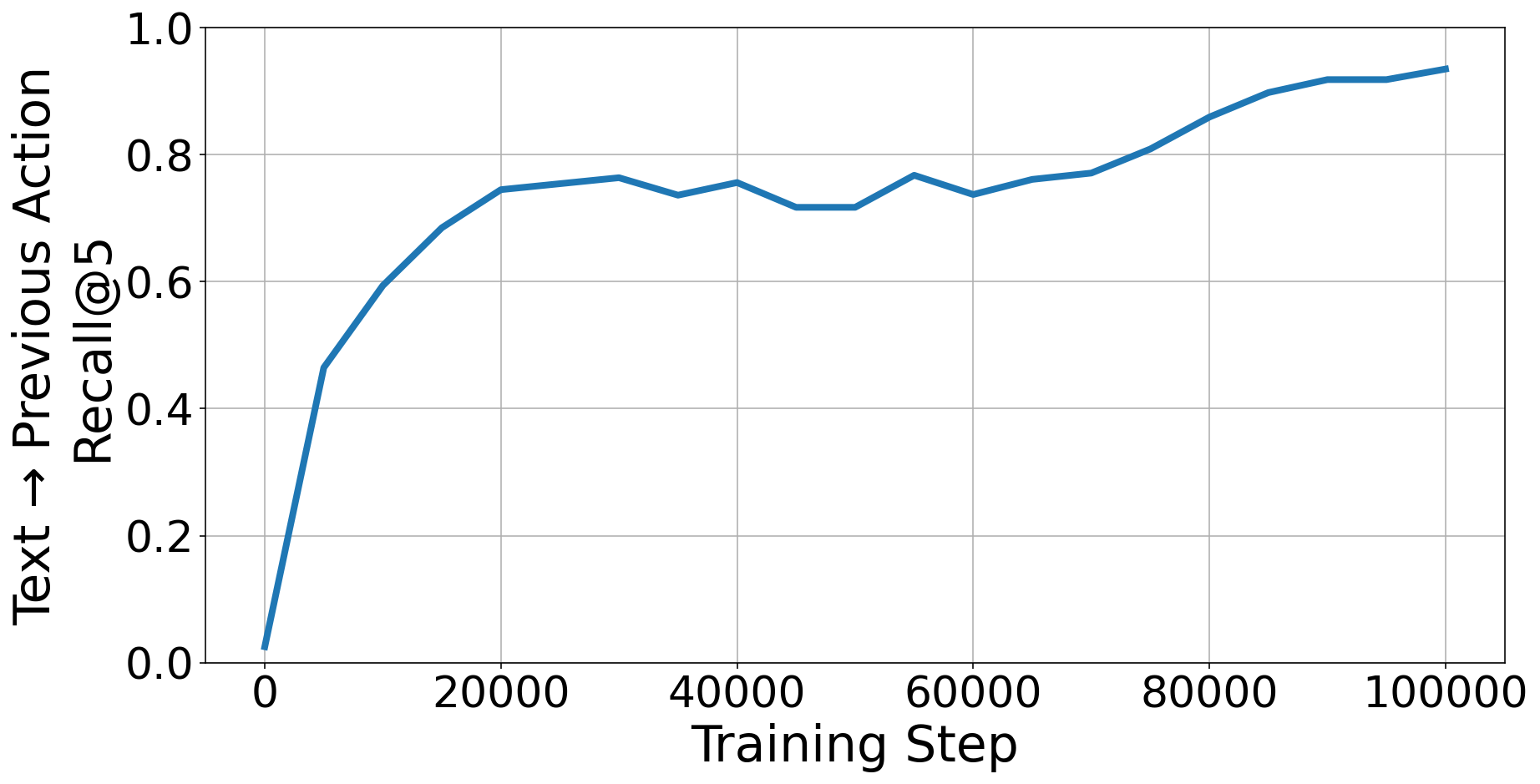}
    }
    
    \caption{Recall@5 results for cross-modal retrieval tasks on the validation data during pre-training.}
    \label{fig:pretraining_retrieval}
    \vspace{-0.2em}
\end{figure*}

% Colab script for this figure: https://colab.corp.google.com/drive/1vyDi-5BxfRNHyQ3asd_YHIau16idZO0r#scrollTo=rRewCBp9Orqt
\begin{figure*}[t]
    \centering
    
    % --- Top Figure ---
    \subfloat{
        \includegraphics[width=0.7\linewidth]{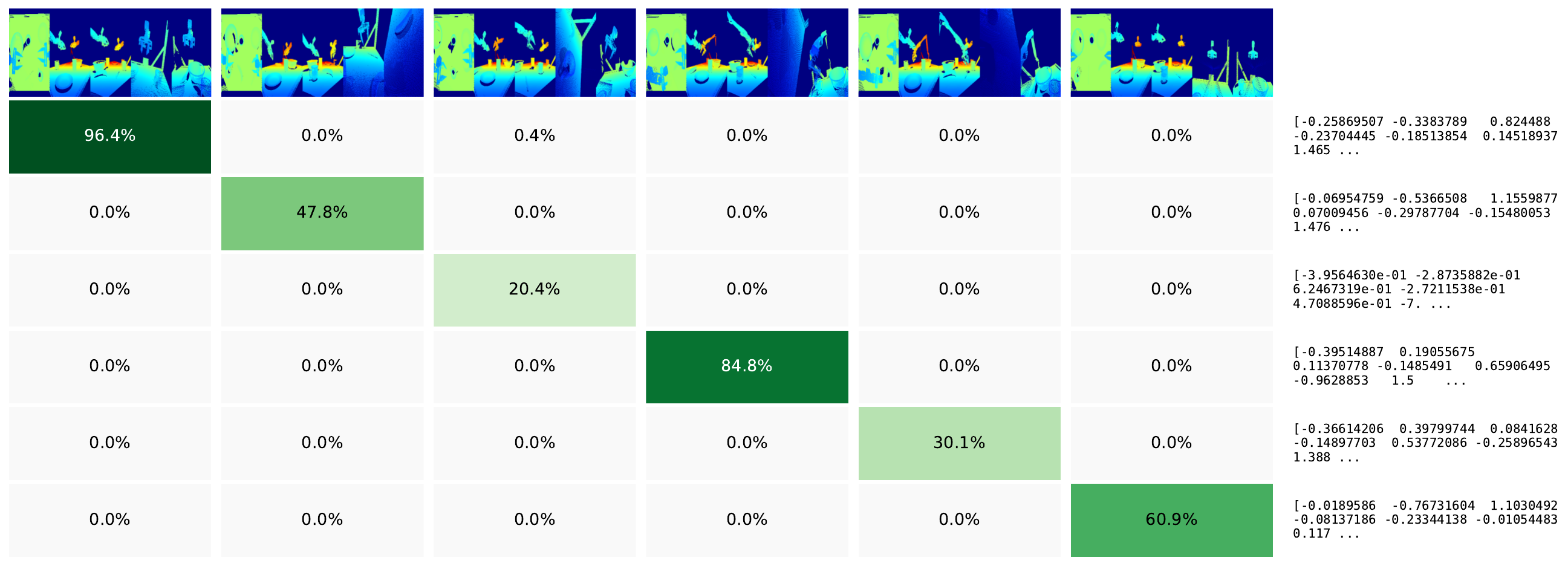}
    }
    
    % This forces a line break to stack them vertically
    \vspace{0.05cm} 
    
    % --- Bottom Figure ---
    \subfloat{
        \includegraphics[width=0.7\linewidth]{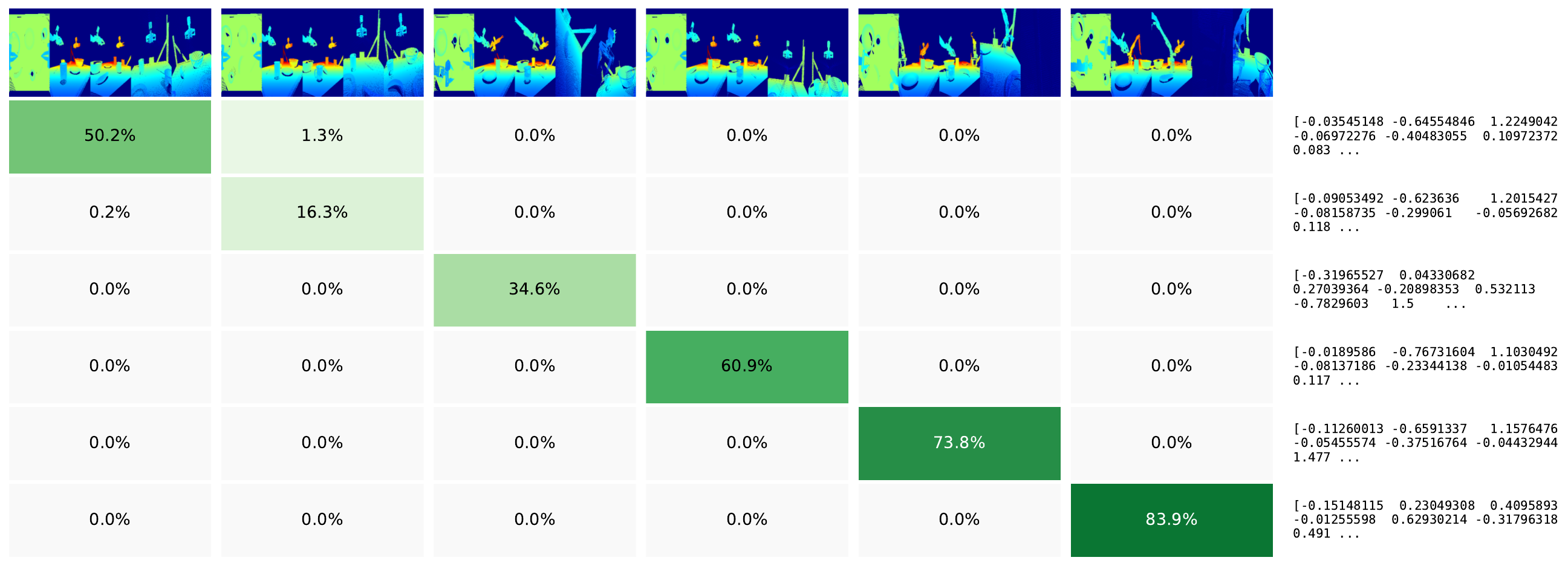}
    }
    
    \caption{SigLIP pairwise matching probabilities on an unseen task: image-to-previous-action (top) and previous-action-to-image (bottom). \method correctly retrieves all image–previous-action pairs on both tasks.}
    \label{fig:pretraining_visualization}
    \vspace{-1.0em}
\end{figure*}

\subsection{Ablations}
\label{sec:ablations}

We ablate key design choices of \method in \autoref{tab:ablation_experiments}. Experiments are conducted in simulation on \texttt{Can Opener in Caddy}, \texttt{Pen in Container}, and \texttt{Mug on Plate}, and results are averaged over three training seeds. As shown in the first two rows, incorporating virtual wrist views substantially improves performance, especially on high-precision tasks such as \texttt{Pen in Container} and \texttt{Mug on Plate}. Moreover, the standard virtual five-view setup used in related work (e.g., \cite{qian20253d}) underperforms our virtual five-view configuration.

In row three, we show that including a text encoder during pre-training yields slightly better performance in simulation,  but we observe that the text encoder noticeably improves real-world performance. We hypothesize that this discrepancy stems from differences between the environments. In simulation, object shapes are consistent and noise-free, so the model relies less on textual information describing the objects. In contrast, objects captured by real-world cameras exhibit texture variations, lighting changes, and other forms of visual noise, making textual information more valuable.

In the following, we ablate the image encoder. Replacing the ViT encoder with a point-cloud encoder (DP3) results in worse performance. Removing STRING positional encoding from ViT, which eliminates learned correlations between tokens from different image patches, also degrades performance. As shown in rows six and seven, it is preferable to pre-train the encoders and keep them frozen during fine-tuning. Finally, the last row shows that ACT with \method performs poorly; we hypothesize this is due to ACT’s limited ability to effectively learn from multimodal demonstrations.

\subsection{Simulation Pre-training Results}
\label{sec:pre_training_results}

% In pc_sim2real/retrieval.py, set a breakpoint to obtain the total number of examples used for validation.  
\autoref{fig:pretraining_retrieval} shows the retrieval results of the \method encoders on six multimodal retrieval tasks on the validation examples. To ensure diversity and accurate assessment, we select examples from the validation set using the following criteria: (1) the time step is a multiple of 10, and (2) the workspace is evenly divided into a $3\times 3$ grid, and the first object’s position has not previously appeared in any grid cell. This results in a total of 3216 examples. As shown in the figure, the \method encoders perform particularly well on tasks that involve retrieving the previous action, highlighting the potential usefulness of the learned action representation. To further examine this representation, we visualize the SigLIP pairwise matching probabilities for image-to-previous-action and previous-action-to-image retrieval on an unseen task in \autoref{fig:pretraining_visualization}. The model correctly retrieves all examples for both tasks, suggesting that the learned image and action representations generalize and can transfer to unseen tasks.

\section{Conclusion}

In this paper, we introduce \method, a 3D action-conditioned pre-training framework for robotic manipulation. \method contrastively pre-trains image, text, and action encoders using 3D multi-view observations, simulator-derived state descriptions, and action histories. The learned image and action representations are then used to fine-tune a Diffusion Policy, which is pre-trained in parallel with the encoders, from limited task demonstrations. Across simulation and real-world experiments, \method substantially improves fine-tuning sample efficiency and final performance over state-of-the-art pretrained and non-pretrained baselines.

\bibliographystyle{plainnat}
\bibliography{references}

\clearpage
\section{Appendix}

\subsection{Proof of Lemma \ref{lemma:cayley-lemma}}
\label{cayley-proof}
\begin{proof}
Note that, by applying the definition of the Cayley-STRING  variant, used in this paper, the not-normalized logits score $s_{i,j}$ between tokens corresponding to $\mathcal{P}_{i}$ and to $\mathcal{P}_{j}$ respectively, is given as:
\begin{equation}
s_{i,j} = (\mathbf{q}^{\prime}_{\mathcal{P}_{i}})^{\top}\mathbf{k}^{\prime}_{\mathcal{P}_{j}},    
\end{equation}
for $\mathbf{q}^{\prime}_{\mathcal{P}_{i}}$ and $\mathbf{k}^{\prime}_{\mathcal{P}_{j}}$ defined as follows:
\begin{equation}
\mathbf{q}^{\prime}_{\mathcal{P}_{i}} = \mathbf{P}\mathrm{RoPE}(\mathbf{r}_{i})\mathbf{P}^{\top}\mathbf{q}_{\mathcal{P}_{i}},
\mathbf{k}^{\prime}_{\mathcal{P}_{j}} = \mathbf{P}\mathrm{RoPE}(\mathbf{r}_{j})\mathbf{P}^{\top}\mathbf{k}_{\mathcal{P}_{j}}
\end{equation}
and where $\mathbf{P}$ is a learnable orthonormal matrix. Here $\mathrm{RoPE}(\mathbf{v})$ stands for the block-diagonal matrix with $\frac{d_{QK}}{2}$ $(2\times2)$-blocks, each given as a $2$-dimensional rotation with the rotation angle defined as: $\delta_{k}(\mathbf{v})$ for the block index $k$ and linear function $\delta_{k}$ (in other words, $\mathrm{RoPE}$ encodes rotation operation consisting of the disjoint Givens rotations; the angles of these Givens rotations are simple linear functions of the positions of the corresponding tokens). Therefore, we have:
\begin{align}
s_{i,j} = \mathbf{q}_{\mathcal{P}_{i}}^{\top}\mathbf{P}\mathrm{RoPE}^{\top}(\mathbf{r}_{i})\mathbf{P}^{\top}\mathbf{P}\mathrm{RoPE}(\mathbf{r}_{j})\mathbf{P}^{\top}\mathbf{k}_{\mathcal{P}_{j}} \\
= \mathbf{v}_{\mathcal{P}_{i}}^{\top} \mathrm{RoPE}(\mathbf{r}_{j}-\mathbf{r}_{i})\mathbf{w}_{\mathcal{P}_{j}}
\end{align}
for vectors $\mathbf{v}_{\mathcal{P}_{i}}$ and $\mathbf{w}_{\mathcal{P}_{j}}$ defined as:
\begin{align}
\begin{split}
\mathbf{v}_{\mathcal{P}_{i}} = \mathbf{P}^{\top}\mathbf{q}_{\mathcal{P}_{i}} \\ 
\mathbf{w}_{\mathcal{P}_{j}} = 
\mathbf{P}^{\top}\mathbf{k}_{\mathcal{P}_{j}} \\ 
\end{split}    
\end{align}
The last equality follows from the fact that $\mathbf{P}^{\top}\mathbf{P}=\mathbf{I}$ (since $\mathbf{P}$ is an orthonormal matrix) and the transpose of the $(2 \times 2)$ rotation matrix is a matrix of the rotation by the negated angle. But this completes the proof since: $\mathbf{r}_{i,j} = \mathbf{r}_{j}-\mathbf{r}_{i}$ and furthermore one can define:
\begin{align}
\begin{split}
\mathbf{q}_{\mathcal{P}_{i}}(\mathcal{L}) = \mathbf{v}_{\mathcal{P}_{i}},    
\mathbf{k}_{\mathcal{P}_{j}}(\mathcal{L}) = \mathbf{w}_{\mathcal{P}_{j}}
\end{split}
\end{align}
(thus the new coordinate system $\mathcal{L}$ is effectively given by the orthonormal matrix $\mathbf{P}^{\top}$).
\end{proof}

% \subsection{Pre-training Results}
% \label{sec:pre_training_results}

% \autoref{fig:pretraining_retrieval} shows the retrieval results of the \method encoders on six multimodal retrieval tasks on the validation examples. To ensure diversity and accurate assessment, we select examples from the validation set using the following criteria: (1) the time step is a multiple of 10, and (2) the workspace is evenly divided into a $3\times 3$ grid, and the first object’s position has not previously appeared in any grid cell. This results in a total of 3216 examples. As shown in the figure, the \method encoders perform particularly well on tasks that involve retrieving the previous action, highlighting the potential usefulness of the learned action representation. To further examine this representation, we visualize the SigLIP pairwise matching probabilities for image-to-previous-action and previous-action-to-image retrieval on an unseen task in \autoref{fig:pretraining_visualization}. The model correctly retrieves all examples for both tasks, suggesting that the learned image and action representations generalize and can transfer to unseen tasks.

\begin{figure}[t]
    \centering
    \includegraphics[width=\columnwidth]{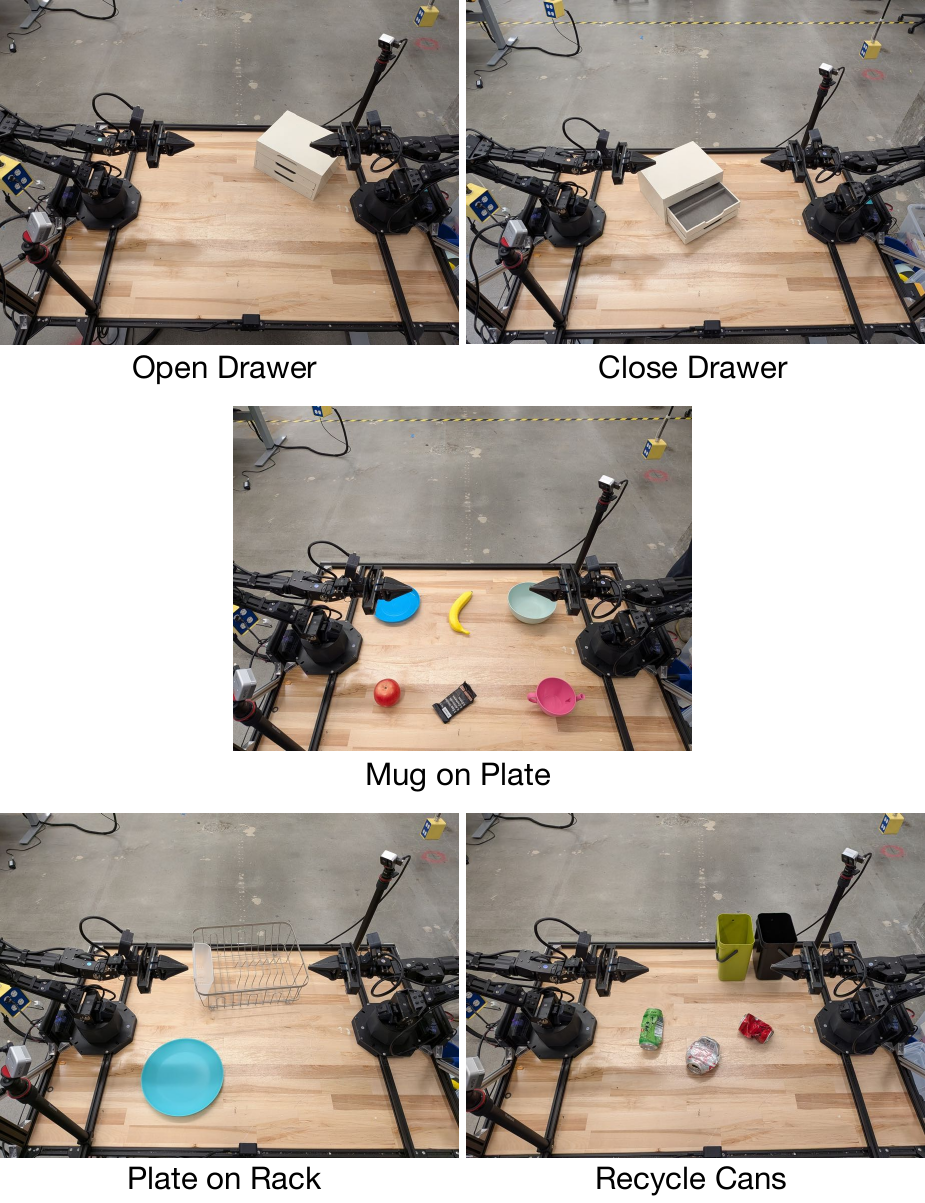}
    \caption{Examples of initial configurations for the real-world tasks.}
    \label{fig:real_world_environments}
    \vspace{-0.2em}
\end{figure}

% \begin{figure*}[t]
%     \centering
    
%     \subfloat[Image → Previous Action Recall@5]{
%         \includegraphics[width=0.3\linewidth]{images/retrieval_image_action.png}
%     }
%     \hfill
%     \subfloat[Image → Text Recall@5]{
%         \includegraphics[width=0.3\linewidth]{images/retrieval_image_text.png}
%     }
%     \hfill
%     \subfloat[Previous Action → Text Recall@5]{
%         \includegraphics[width=0.3\linewidth]{images/retrieval_action_text.png}
%     }
    
%     \subfloat[Previous Action → Image Recall@5]{
%         \includegraphics[width=0.3\linewidth]{images/retrieval_action_image.png}
%     }
%     \hfill
%     \subfloat[Text → Image Recall@5]{
%         \includegraphics[width=0.3\linewidth]{images/retrieval_text_image.png}
%     }
%     \hfill
%     \subfloat[Text → Previous Action Recall@5]{
%         \includegraphics[width=0.3\linewidth]{images/retrieval_text_action.png}
%     }
    
%     \caption{Recall@5 results for cross-modal retrieval tasks on the validation data during pre-training.}
%     \label{fig:pretraining_retrieval}
% \end{figure*}

\subsection{Implementation Details}
\label{sec:implementation_details}

\textbf{Contrastive Pre-training.} We implement the pre-training pipeline in JAX \cite{jax2018github} using the Big Vision framework \cite{big_vision}. We train for 100,000 steps on 64 TPUv4 chips with a batch size of 2048. For image preprocessing, we use a voxel size of 0.001 for voxel-grid downsampling. If the point cloud contains more than 0.3M points, we randomly subsample to 0.3M; otherwise, we pad with points at the origin (0,0,0) to reach 0.3M points. Each virtual view has dimensions $224 \times 224 \times 4$, and we tile the views horizontally to form a $224 \times 1120 \times 4$ input. The action encoder uses a feed-forward dimension of 2048 and a dropout rate of 0.1. Both the image and text encoders use a 3072-dimensional MLP. We use the Adam \cite{KingmaB14} optimizer with $\text{b2}=0.999$ and a cosine-decay learning-rate schedule, a base learning rate of $1e-4$ and 5\% warm-up steps during training. 

\textbf{Diffusion Policy Pre-training and Fine-Tuning.} We implement Diffusion Policy in JAX using an internal robot learning framework. We train for 1M steps on an NVIDIA H100 GPU with a batch size of 8. We use the Adam optimizer with a weight decay of $1e-4$ and a linear learning-rate schedule from 0.0 to $1e-4$ over 5000 transition steps for pre-training and from 0.0 to $1e-5$ over the same transition steps for fine-tuning. We use $848\times480\times3$ RGB images as input to ResNet and apply photometric distortions. The Transformer encoder and decoder use a feed-forward dimension of 3200 and a dropout rate of 0.1. Observations are normalized using the mean and standard deviation computed from the pre-training data, and actions are normalized using the minimum and maximum values from the same dataset.

% \begin{table}[t]
% \centering
% \caption{Additional ablation studies.}
% \label{tab:additional_ablation_design_choices}
% \small
% \setlength{\tabcolsep}{6pt}
% \renewcommand{\arraystretch}{1.1}
% \begin{tabular}{p{0.55\columnwidth} r}
% \toprule
% \textbf{Variation} & \textbf{Can Opener In Caddy} \\
% \midrule
% Image random cropping & -2.0\% \\
% Pre-training decoder only & -10.0\% \\
% Dome images & -42.0\% \\
% Pooled encoder features  
% (fine-tuning) & -12.0\% \\
% Absolute Cartesian control & 0.0\% \\
% Relative Cartesian control & -40.0\% \\
% \bottomrule
% \end{tabular}
% \end{table}

% \begin{figure*}[t]
%     \centering
    
%     \subfloat{
%         \includegraphics[width=0.9\linewidth]{images/visualization_image_action.pdf}
%     }
    
%     \vspace{0.2cm} 
    
%     \subfloat{
%         \includegraphics[width=0.9\linewidth]{images/visualization_action_image.pdf}
%     }
    
%     \caption{SigLIP pairwise matching probabilities on an unseen task: image-to-previous-action (top) and previous-action-to-image (bottom). \method correctly retrieves all image–previous-action pairs on both tasks.}
%     \label{fig:pretraining_visualization}
%     \vspace{-1.0em}
% \end{figure*}

\begin{figure*}[t] %
    \centering
    
    \subfloat{
        \includegraphics[width=0.3\linewidth]{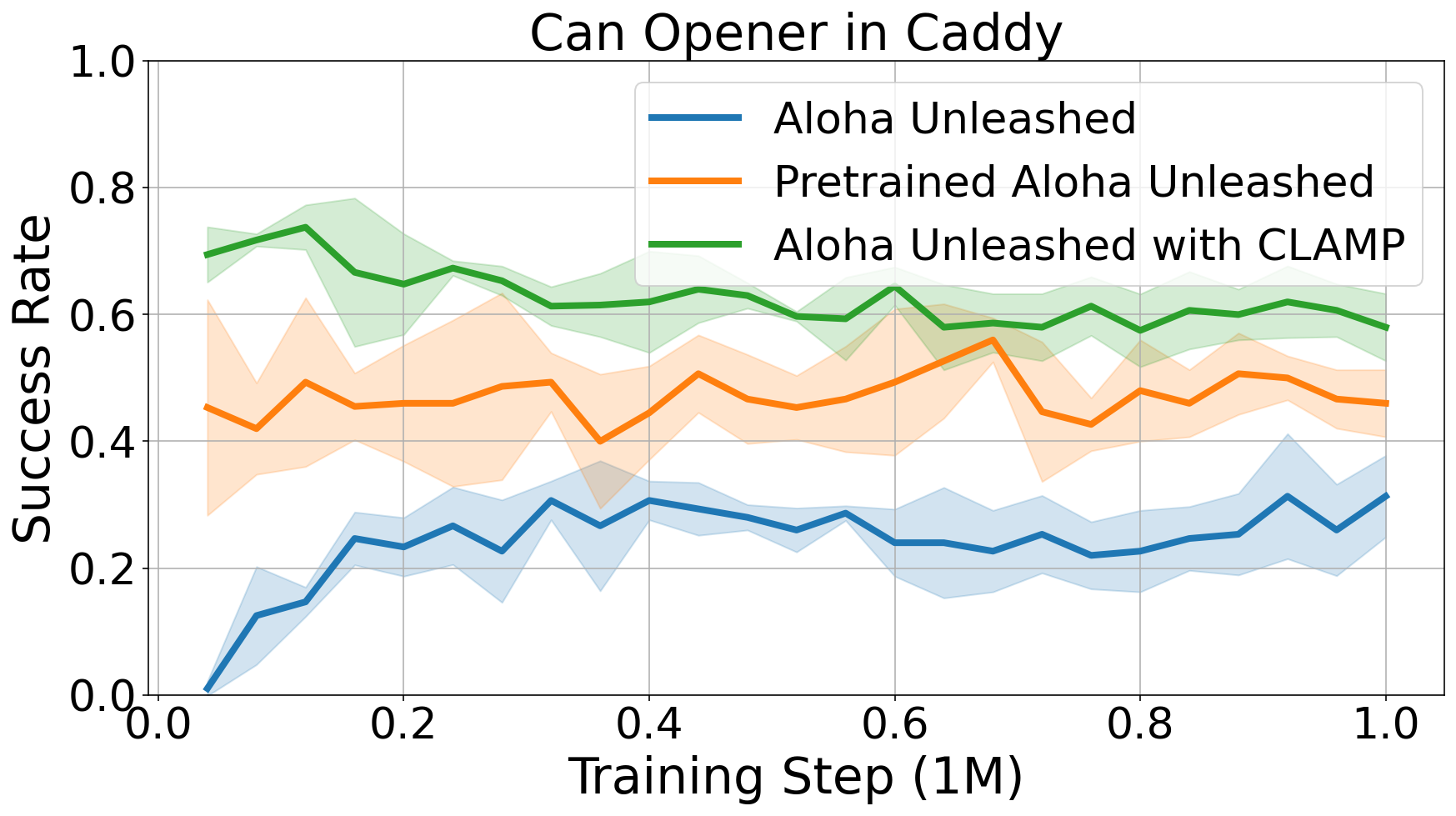}
    }
    \hfill
    \subfloat{
        \includegraphics[width=0.3\linewidth]{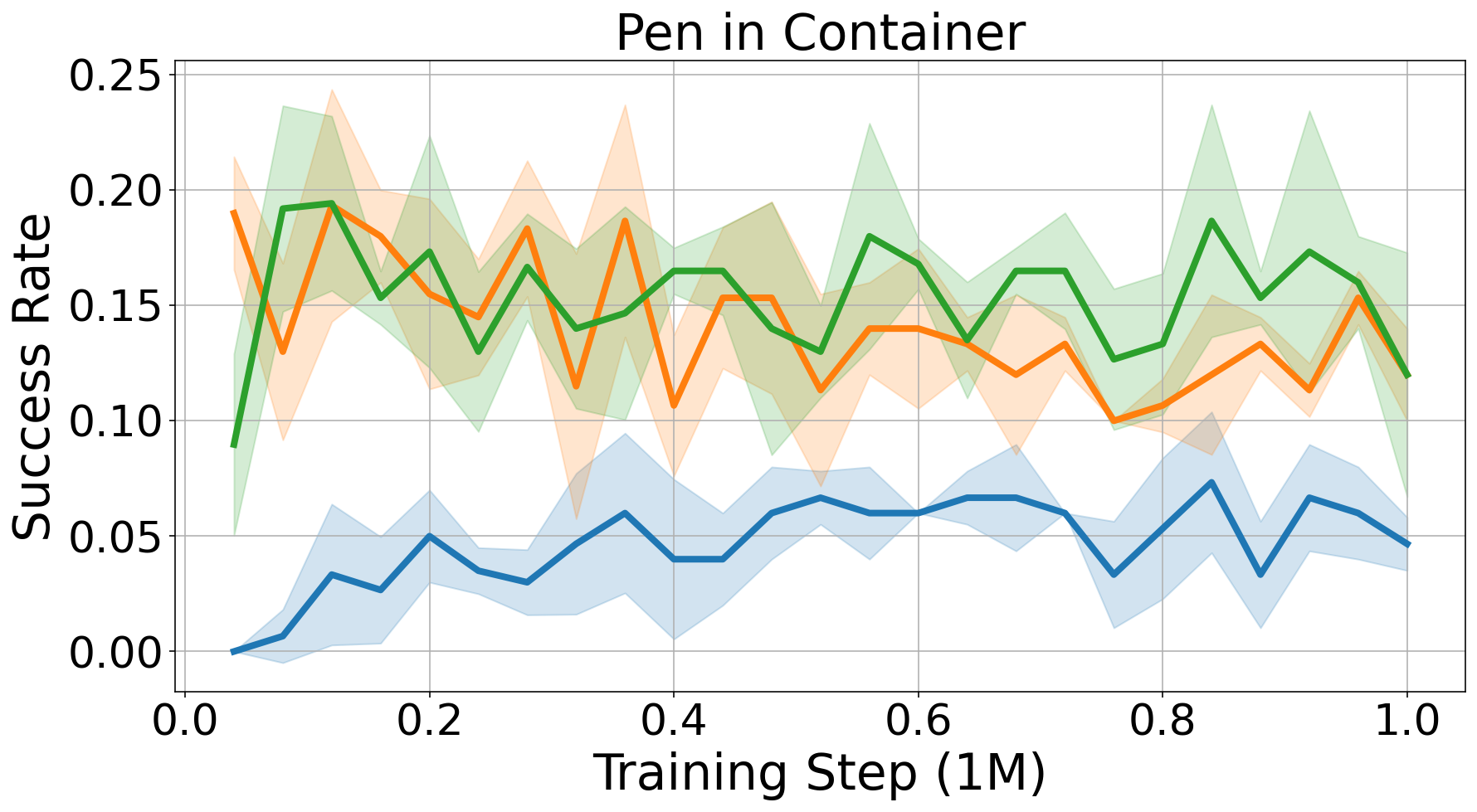}
    }
    \hfill
    \subfloat{
        \includegraphics[width=0.3\linewidth]{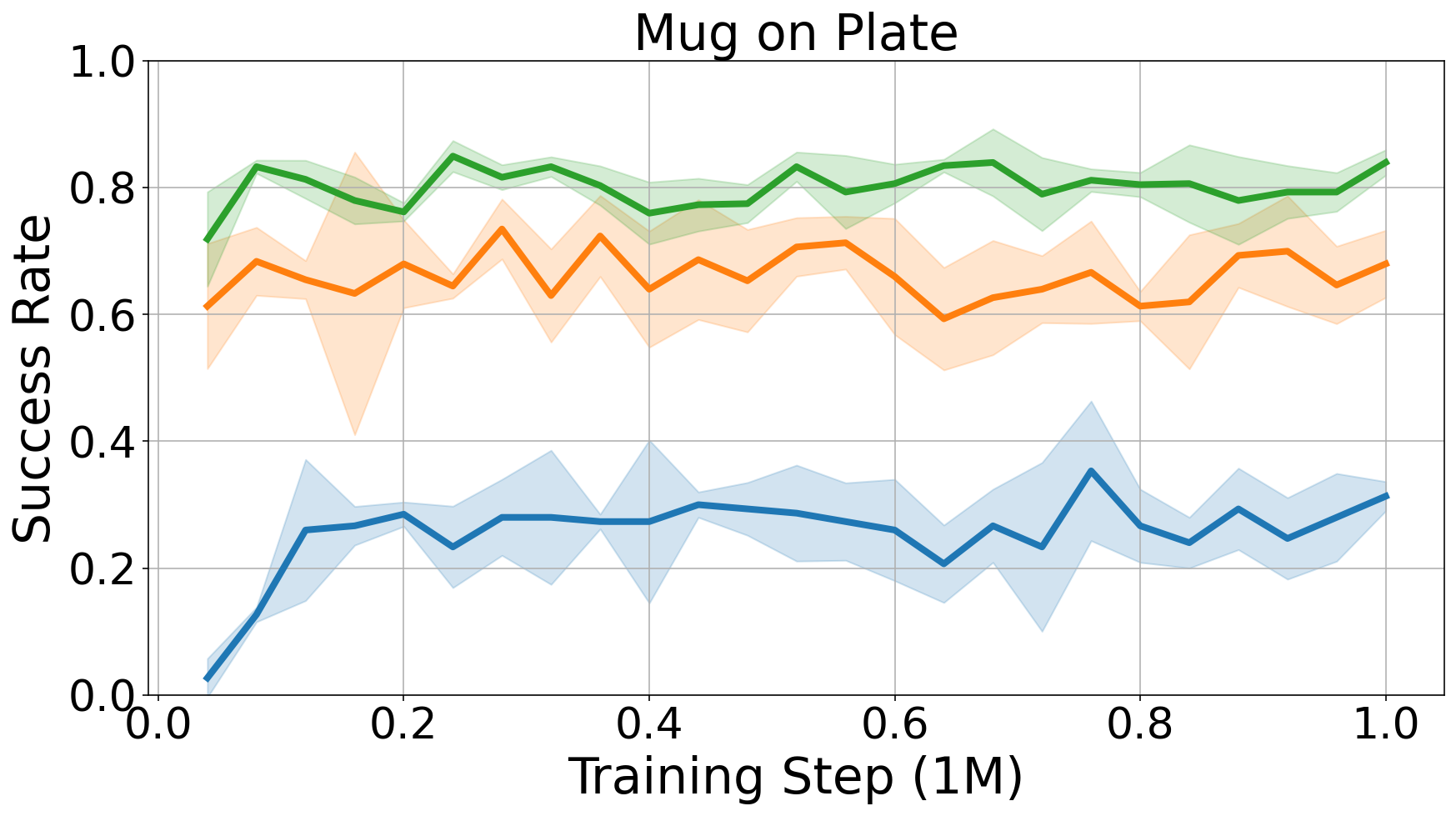}
    }
    
    \caption{Evaluation curves comparing our method (green) with baselines across three seeds in simulation under a low-data regime. All methods are trained on 100 demonstrations for 1M environment steps and evaluated over 50 trials per checkpoint every 40K steps. Note that the y-axis ticks in the \texttt{Pen in Container} plot differ for improved visualization.}
    \label{fig:eval_curves_low_data}
\end{figure*}

\subsection{Additional Experimental Details}

\textbf{Replacing the ViT encoder with DP3 (\autoref{tab:ablation_experiments}, row four)}. DP3 takes as input a processed point cloud that is cropped and voxel-grid downsampled before being re-rendered from virtual viewpoints. This ablation demonstrates that the structured image representation used in our multi-view setup outperforms the unstructured point-cloud representation, and that our multi-view design is essential for achieving strong performance. Moreover, we also experimented with a point cloud Transformer and found that it required significant downsampling of the point cloud to maintain the same batch size used in our multi-view architecture, while a sufficiently large batch size is important for contrastive learning.

\subsection{Additional Experimental Results}

\autoref{tab:additional_ablation_design_choices} presents additional ablations of \method on \texttt{Can Opener in Caddy} in simulation. In the first row, we apply random cropping to images during training and center cropping during inference, which results in worse performance. In the second row, instead of pre-training the entire Diffusion Policy, we pre-train only the Transformer decoder, which also degrades performance. In the third row, we evaluate a 16-view setup constructed using a dome-like structure, where each camera looks at the center of the workspace and the final image has dimensions of $96 \times 1536 \times 4$. However, this performs worse than the five-view setup proposed in the paper. In the fourth row, instead of using unpooled features from the frozen \method encoders, we feed MAP-pooled features (together with ResNet and proprioceptive features) into the Transformer encoder; this highlights the importance of using unpooled features from the frozen encoders. In the last two rows, we evaluate two additional action spaces: absolute Cartesian control and relative Cartesian control. We find that absolute Cartesian control performs similarly to absolute joint-space control, whereas relative Cartesian control performs worse.

\begin{table}[t]
\centering
\caption{Additional ablation studies.}
\label{tab:additional_ablation_design_choices}
\small
\setlength{\tabcolsep}{6pt}
\renewcommand{\arraystretch}{1.1}
\begin{tabular}{p{0.55\columnwidth} r}
\toprule
\textbf{Variation} & \textbf{Can Opener In Caddy} \\
\midrule
Image random cropping & -2.0\% \\
% Only pre-training policy's Transformer action decoder & -10.0\% \\
Pre-training decoder only & -10.0\% \\
Dome images & -42.0\% \\
Pooled encoder features  
(fine-tuning) & -12.0\% \\
Absolute Cartesian control & 0.0\% \\
Relative Cartesian control & -40.0\% \\
\bottomrule
\end{tabular}
\end{table}

\autoref{fig:eval_curves_low_data} shows the evaluation curves of our method (green line) and the baselines, averaged over three training seeds, under a low-data regime. ALOHA Unleashed with \method outperforms all methods in terms of learning efficiency and maximum success rate across all checkpoints. Compared to the same variant trained on more data (\autoref{fig:eval_curves}), this variant performs suboptimally, suggesting that \method fine-tuning is best suited to settings where the fine-tuning dataset is sufficiently large and diverse.

\subsection{Dataset Generation Details}
\label{appendix:sim_dataset}

\subsubsection{GR1.5 tasks and episode counts}
\begin{itemize}
  \item \texttt{multitoolsmagnifierincaddy\_left}: 47{,}679
  \item \texttt{multitoolsscrewdriverincaddy\_left}: 42{,}466
  \item \texttt{multitoolsmagnifierincaddy\_right}: 47{,}613
  \item \texttt{handoverpen}: 39{,}603
  \item \texttt{multitoolsscissorsincaddy\_right}: 42{,}167
  \item \texttt{multitoolscanopenerincaddy\_right}: 42{,}069
  \item \texttt{multitoolsscissorsincaddy\_left}: 40{,}568
  \item \texttt{multitoysraccooninbasket}: 40{,}409
  \item \texttt{multitoysfireengineinbasket}: 31{,}745
  \item \texttt{multitoystraininbasket}: 22{,}867
  
  \item \texttt{multidiningbananainbowl\_vislighting}: 9{,}469
  \item \texttt{multidiningbananainbowl\_visbackground}: 8{,}519
  \item \texttt{multilunch\_redgrape\_right}: 8{,}342
  \item \texttt{multidiningbananainbowl}: 7{,}890
  \item \texttt{multibanana\_camperturb\_rot10deg}: 7{,}871
  \item \texttt{multilunch\_greengrape\_left}: 7{,}409
  \item \texttt{multifruitteabottleinbowl}: 6{,}780
  \item \texttt{marker\_camperturb\_intrin10}: 6{,}444
  \item \texttt{multitools2\_screwdriver\_left}: 5{,}965
  \item \texttt{laptopclose}: 5{,}818
  \item \texttt{microwaveclose}: 5{,}642
  \item \texttt{marker\_camperturb\_rot10deg}: 5{,}608
  \item \texttt{multidiningbananainbowl\_reasoning1}: 5{,}454
  \item \texttt{multitools2\_hammer\_left}: 5{,}421
  \item \texttt{multilunch\_redgrape\_left}: 5{,}327
  \item \texttt{multilunch\_greengrape\_right}: 5{,}248
  \item \texttt{glassonrack}: 5{,}175
  \item \texttt{multibanana\_camperturb\_pos10cm}: 4{,}965
  \item \texttt{multitools2\_hammer\_right}: 4{,}932
  \item \texttt{multibanana\_camperturb\_intrin10}: 4{,}854
  \item \texttt{fruitbowl}: 4{,}853
  \item \texttt{multibanana\_camperturb\_pos5cm}: 4{,}692
  \item \texttt{bowlonrack}: 4{,}380
  \item \texttt{multidiningbananainbowl\_control}: 4{,}133
  \item \texttt{multitoolsscissorsincaddy\_right}: 4{,}118
  \item \texttt{multidiningbananainbowl\_reasoning0}: 3{,}924
  \item \texttt{multibanana\_frozencam\_overhead}: 3{,}910
  \item \texttt{multibanana\_frozencam\_bothextcams}: 1{,}267
  \item \texttt{marker\_camperturb\_pos5cm}: 468
  \item \texttt{marker\_frozencam\_overhead}: 370
  \item \texttt{microwaveopen}: 310
  \item \texttt{multilegoblocksinbag}: 294
  \item \texttt{marker\_frozencam\_bothextcams}: 263
  \item \texttt{marker\_frozencam\_bothwristcams}: 253
  \item \texttt{spooninholder}: 78
  \item \texttt{marker\_camperturb\_pos10cm}: 55
  \item \texttt{utensilsinholder}: 37
  \item \texttt{multibanana\_frozencam\_bothwristcams}: 12
  \item \texttt{multitoysvehiclesinbasket}: 2
\end{itemize}

\end{document}